\documentclass{article}
\usepackage{arxiv}

\usepackage[utf8]{inputenc} 
\usepackage[T1]{fontenc}    
\usepackage{algorithmic}
\usepackage{algorithm}
\usepackage{array}
\usepackage{textcomp}
\usepackage{stfloats}
\usepackage{url}
\usepackage{verbatim}
\usepackage{amsmath,amssymb,amsfonts,amsthm}
\usepackage{xfrac}
\usepackage{nicefrac}       
\usepackage{microtype}      
\usepackage{siunitx}
\usepackage{pifont}

\usepackage{graphicx}
\usepackage{booktabs}
\usepackage[table]{xcolor}
\usepackage[percent]{overpic}
\usepackage{multirow}
\usepackage{orcidlink}

\usepackage{orcidlink}
\usepackage[accsupp]{axessibility}

\newlength{\ColumnHeight}
\newtheorem{theorem}{Theorem}
\newtheorem{lemma}[theorem]{Lemma}
\newtheorem{proposition}[theorem]{Proposition}
\newtheorem{corollary}[theorem]{Corollary}
\theoremstyle{definition}
\newtheorem{remark}{Remark}

\usepackage{xr-hyper} 
\newcommand{\myexternaldocument}[1]{\externaldocument{#1}}
\newif\ifarxiv
\unless\ifarxiv \myexternaldocument{_supplement} \fi

\usepackage{cleveref}
\usepackage{subcaption}
\usepackage{natbib}
\usepackage{doi}

\title{Quasi-SVD: Learning a Lie-constrained matrix factorisation for real-time imaging}

\newif\ifuniqueAffiliation
\uniqueAffiliationtrue

\ifuniqueAffiliation 
\author{ \href{https://orcid.org/0000-0003-2786-9905}{\includegraphics[scale=0.06]{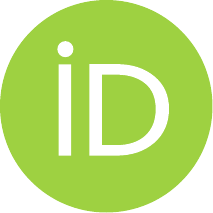}\hspace{1mm}Christopher Hahne}\thanks{Further information is found on the author website: \url{https://hahne.website}.} \\
	University of Bern\\
	3008 Bern, Switzerland \\
	\texttt{christopher.hahne@unibe.ch} \\
}
\else
\usepackage{authblk}

\newbox{\orcid}\sbox{\orcid}{\includegraphics[scale=0.06]{orcid.pdf}} 
\author[1]{%
	\href{https://orcid.org/0000-0003-2786-9905}{\usebox{\orcid}\hspace{1mm}Christopher Hahne\thanks{\texttt{christopher.hahne@unibe.ch}}}%
}
\affil[1]{Department of Computer Science, University of Bern, 3008 Bern, Switzerland}
\fi

\renewcommand{\shorttitle}{Quasi-SVD}

\hypersetup{
pdftitle={Quasi-SVD: Learning a Lie-constrained matrix factorisation for real-time imaging},
pdfsubject={q-bio.NC, q-bio.QM},
pdfauthor={Christopher Hahne},
pdfkeywords={Singular value decomposition, Lie algebra, GPU parallelization, Real-time imaging},
}

\begin{document}
\maketitle

\author{Christopher Hahne\textsuperscript{\large\orcidlink{0000-0003-2786-9905}}
\thanks{This work is funded in part by the Hasler Foundation, Bern, CH [Grant number 22027]. 
}
}

\maketitle

\begin{abstract}
Singular Value Decomposition (SVD) underlies matrix factorisation tasks across computational 
imaging, with medical applications increasingly demanding real-time processing. Yet SVD algorithms are inherently sequential, constraining real-time GPU throughput and limit online deployment in clinical pipelines. This study introduces Quasi-SVD, a differentiable, fully parallelized matrix factorization framework for GPUs. Rather than enforcing orthogonality on both factors, it guarantees exact orthogonality for a single Lie-parameterized factor while recovering the remaining components through soft constraints, enabling efficient parallel decomposition without iterative singular-vector optimization. This asymmetric design, provably sufficient for valid factorisation, achieves reconstruction fidelity of SSIM = 0.89–0.94 and accelerates computation by 3–20× relative to cuSOLVER and randomised SVD, enabling throughput above 25 FPS. Performance is evaluated on two medical imaging tasks spanning complementary computational regimes: (1) spatio-temporal background subtraction for ultrasound localisation microscopy, requiring high-dimensional matrix separation, and (2) Mueller matrix polarimetry for neurosurgical tissue characterisation, requiring massive batch processing of small matrices. Across both regimes and multiple imaging instruments, the proposed framework demonstrates robust domain transfer and throughput exceeding 25 FPS at clinical matrix scales, a rate sufficient for live image-guided workflows that classical solvers cannot currently support in these settings. By prioritising downstream reconstruction fidelity over exact spectral recovery, Quasi-SVD makes structured matrix factorisation practical for real-time imaging. 
\end{abstract}


\section{Introduction}
\label{sec:intro}

Real-time, image-guided clinical workflows increasingly depend on GPU-accelerated imaging pipelines. 
Two such workflows anchor this work: intraoperative tissue characterisation, where 
surgeons rely on instantaneous polarimetric feedback to delineate tumour margins, and ultrasound 
localisation microscopy, where clinicians track microvascular flow in real time. %
A common computational core across medical imaging modalities is the Singular Value Decomposition (SVD), used for subspace denoising, dimensionality reduction, and structured signal separation in modalities ranging from spatio-temporal filtering in ultrasound localisation microscopy~(ULM)~\cite{demene2015spatiotemporal,errico2015ultrafast,baranger2018adaptive,hahne:24:tmi} to MRI coil compression~\cite{otazo2015low,zbontar2018fastmri,johnson2021evaluation,cole2022learned} and Mueller matrix polarimetry (MMP) for surgical guidance~\cite{lu1996interpretation,moriconi2024near,Hahne:25:OpEx}. In each of these settings, SVD is not a peripheral step but the rate-limiting one: existing solvers throttle ULM to approximately 4~FPS~\cite{moriconi2024near,Hahne:25:OpEx} or force background subtraction into an offline step~\cite{demene2015spatiotemporal,hahne:24:tmi}, well below the throughput required for live image-guided decision-making.

This bottleneck is structural rather than incidental. Classical SVD algorithms~\cite{golub1971singular,hansen1987truncated,hansen1990truncated,halko2011finding,musco2015randomized,STRUSKI:2024} rely on Householder reflections, pivoted QR, and iterative refinement that impose sequential data dependencies between steps. These dependencies prevent full use of the thousands of cores available on modern GPUs, leaving accelerators largely idle during the most expensive stages of decomposition regardless of how aggressively the algorithm is otherwise optimised. The problem manifests at two computational extremes that are both clinically relevant: high-throughput batches of small matrix decompositions, as in per-pixel Mueller matrix normalisation, and memory-intensive decompositions of high-dimensional data, as in temporal subspace filtering over hundreds of ultrasound frames. Neither extreme is well served by solvers designed around sequential bidiagonalisation, and the resulting latency is what currently keeps several SVD-dependent imaging pipelines offline.

GPU-parallel and learning-based alternatives have each addressed part of this problem without resolving it. Parallel implementations of classical SVD, including CUDA-based bidiagonalisation~\cite{lahabar2009singular} and Jacobi-type solvers exposed through cuSOLVER~\cite{nvidia2025cusolver,demmel1992jacobi,novakovic2015hierarchically}, improve throughput but either retain sequential reflection-based steps or restrict batched parallelism to small matrices, falling back to sequential routines beyond that regime. Learning-based decompositions such as SV-Learn~\cite{xu2022sv} and operator-level approaches~\cite{ryu2024operator} replace explicit factorisation with a learned mapping, but they enforce orthogonality only through soft penalties or omit it entirely, accumulating numerical drift that compromises the very property that makes SVD-based filtering reliable. Separately, matrix exponentials over Lie algebras have been used to enforce hard orthogonality constraints in neural network layers~\cite{lezcano2019cheap,li2020efficient} and in camera pose estimation~\cite{Whelan-RSS-15,teed2021tangent,hayoz2023learning,matsuki2024gaussian}, but this machinery has not been used to constrain a learned matrix decomposition itself. As a result, no existing method combines GPU-native parallelism with a guarantee of exact orthogonality during decomposition: classical solvers are parallel-unfriendly by construction, while learning-based, parallel-friendly solvers cannot guarantee the orthogonality their downstream reconstructions implicitly assume.

This work closes that gap with Quasi-SVD, a Lie-theoretic, fully differentiable matrix factorisation framework that resolves both limitations simultaneously. Rather than constraining both singular-vector factors, Quasi-SVD enforces \emph{exact} orthogonality on a single factor via the matrix exponential over a skew-symmetric Lie algebra, while recovering the remaining components through soft constraints. This asymmetric construction is provably sufficient for valid factorisation, and it deliberately trades exact spectral recovery for reconstruction fidelity and throughput, a trade-off validated directly on the two clinically motivated regimes above rather than on generic benchmarks. This reframes SVD approximation as a deployment-oriented design choice rather than a purely numerical one.

\paragraph{Contributions} This work makes three key contributions:
\begin{itemize}
    \item \textbf{Orthogonality via Lie theory}: the first application of matrix exponentials from Lie algebra to enforce exact orthogonality on a single factor of a learned SVD decomposition, eliminating the numerical drift that affects approaches enforcing orthogonality only through soft penalties on all factors.
    \item \textbf{GPU-native parallelisable architecture}: a fully differentiable, trainable framework that achieves $>3$–$20\times$ speedup over algorithmic baselines while maintaining decomposition accuracy, demonstrated on real imaging data spanning complementary matrix dimensions ($3\times3$ to $500\times500$).
    \item \textbf{Practical impact on real-time imaging}: validation across ULM and MMP shows throughput exceeding 25~FPS at clinical matrix scales, making previously offline or sub-real-time pipelines deployable, including on cost-constrained deployment.
\end{itemize}

\paragraph{Outline} The remainder of this paper is organised as follows. Section~\ref{sec:related} situates the Quasi-SVD against classical sequential solvers, GPU-parallel implementations, learning-based decompositions, and Lie-theoretic orthogonality constraints, motivating the asymmetric design proposed here. Section~\ref{sec:method} details the Quasi-SVD framework, including the analytic, neural, and end-to-end model variants, their training objectives, and a work--span complexity analysis relative to classical baselines. Section~\ref{sec:exp} evaluates proposed methods on MMP and ULM, reporting reconstruction fidelity, throughput, and cross-instrument domain transfer. Section~\ref{sec:discussion} discusses the accuracy--throughput trade-off and limitations, before Section~\ref{sec:conclusion} concludes and outlines future directions.

\section{Related work}
\label{sec:related}

\paragraph{Traditional SVD Methods}
Singular Value Decomposition decomposes a matrix $\mathbf{A} \in \mathbb{R}^{m \times n}$ as $\mathbf{A} = \mathbf{U} \mathbf{\Sigma} \mathbf{V}^{\top}$, where $\mathbf{U}\in\mathrm{O}(m)$ and $\mathbf{V}\in\mathrm{O}(n)$ are orthogonal matrices and $\mathbf{\Sigma}$ contains ordered singular values~\cite{golub2013matrix}. 
The full SVD has cubic time complexity $\mathcal{O}(\min(m,n) \cdot mn)$ via Householder bidiagonalization and QR iteration~\cite{golub1971singular}, which impose sequential dependencies for batched matrix factorisations on modern parallel hardware and thus limit the throughput~\cite{lahabar2009singular}. The computational burden is further compounded when moving to high-order arrays, where Tensor SVD~\cite{zhang2018tensor} encounters a fundamental statistical-computational gap that renders optimal recovery NP-hard under moderate signal-to-noise ratios. %
Approximation methods like Truncated SVD~\cite{hansen1987truncated,hansen1990truncated,STOLL20122795} and Randomised SVD~\cite{halko2011finding,musco2015randomized} reduce complexity but retain sequential bottlenecks that prevent efficient GPU parallelisation. Rank–revealing QR still hinges on pivoted column selection heuristics that require dependent branching and synchronization~\cite{chan1987rankrevealing,gu1996efficient}. Nyström methods only postpone the problem by reducing the effective matrix size~\cite{drineas2005nystrom,musco2017nystrom}, but the subsequent SVD on the reduced core is still \emph{sequential} and remains the runtime limit. As alternatives, Streaming SVD updates decomposition dynamically as new data arrives~\cite{BRAND200620} and k-SVD employs an iterative framework to learn sparse representations by optimizing dictionary atoms~\cite{aharon2006k,scetbon2021deep}. Despite their sophistication, these methods do not resolve the core structural barrier, which is to map cleanly to highly parallel computation architectures.

\paragraph{Parallel SVD Implementations} %
To overcome sequential bottlenecks, CUDA-based implementations~\cite{lahabar2009singular} 
accelerate SVD for large matrices using partial bidiagonalization followed by QR or 
divide-and-conquer routines, although Householder transformations remain inherently 
sequential and limit parallelism~\cite{golub1971singular,novakovic2015hierarchically}. 
Jacobi-based algorithms~\cite{demmel1992jacobi,golub2013matrix,novakovic2015hierarchically} 
offer a more parallelizable alternative by decomposing rotations over independent 
$2\times2$ pivot submatrices, achieving high numerical accuracy and natural suitability 
for block-parallel computation. Modern GPU libraries such as NVIDIA cuSOLVER implement Jacobi-type solvers (e.g., \texttt{gesvdj}) to exploit this parallelism~\cite{nvidia2025cusolver}, but these batched Jacobi routines are practically restricted to small matrices ($m,n \le 32$). For larger matrices, cuSOLVER falls back to traditional bidiagonalization-based 
SVD, which is sequential and less GPU-friendly but stable and general.

\paragraph{Learning-Based Approaches} %
Recent work has explored neural networks for SVD-related tasks. SV-Learn~\cite{xu2022sv} uses MLPs to predict singular values but suffers from numerical instability due to soft constraints and cannot recover singular vectors. Dictionary‑learning methods such as Deep K‑SVD~\cite{scetbon2021deep} produce sparse representations via iterative dictionary updates which rely on solvers with loop‑heavy control flow. %
Operator-SVD~\cite{ryu2024operator} learns spectral decompositions of linear operators through nested low-rank approximation, achieving theoretical elegance but at significant computational cost unsuitable for real-time imaging. 
Beyond normalisation, SVD is instrumental in spectral clustering layers, though the high computational cost of the decomposition often necessitates its replacement with dual autoencoder architectures to maintain end-to-end efficiency~\cite{yang2019deep}. While such replacements improve speed, they often sacrifice the explicit spectral interpretability of the SVD. 
This work addresses these limitations by maintaining a mathematically principled framework that approximates matrix decomposition directly, trading strict spectral exactness for the throughput required in real-time imaging.

\paragraph{Lie Groups and Orthogonal Constraints} 

Matrix exponentials provide natural parameterisations of orthogonal groups through the exponential map from Lie algebras. While applied in visual odometry~\cite{Whelan-RSS-15,teed2021tangent,hayoz2023learning,matsuki2024gaussian} and investigated in ML for hard orthogonality constraints in network layers via the matrix exponential~\cite{lezcano2019cheap} and Cayley transform~\cite{li2020efficient}, no prior work has integrated these tools to enforce orthogonality in a learned decomposition of streaming input matrices. Existing learned SVD methods either enforce orthogonality implicitly through unconstrained optimisation~\cite{ryu2024operator} or omit it entirely~\cite{xu2022sv}, accumulating numerical drift. Methods such as DiTASK~\cite{mantri2025ditask} that do exploit Lie structure operate downstream of a fixed decomposition, adapting singular values of pre-trained weights rather than computing factors online.

Across these lines of work, a structural tension recurs between parallelism and orthogonality 
guarantees. Classical and GPU-parallel approaches accelerate computation but retain sequential branching steps that resist SIMT parallelism~\cite{lahabar2009singular,nvidia2025cusolver}; learning-based decompositions replace hard orthogonality with soft penalties that accumulate drift~\cite{xu2022sv,ryu2024operator}; and Lie-theoretic constraints, though proven effective elsewhere~\cite{lezcano2019cheap,li2020efficient}, have not been applied to learned decomposition. No prior method therefore satisfies both requirements simultaneously.

\section{Proposed method}
\label{sec:method}

This work targets an efficient, learnable SVD approximation that preserves reconstruction and downstream accuracy while minimizing GPU wall‑clock latency and keeping training cost practical.
The Quasi‑SVD framework comprises a spectrum of module variants illustrated in Fig.~\ref{fig:arch:ablation}: analytic closed‑form, unconstrained neural net (UNN), Lie‑informed parametrizations, and end‑to‑end prediction where each trades off accuracy, latency, and training effort. Full derivations, work–span analyses, and implementation details appear in the supplement. %

\paragraph{Notation} In close analogy to the SVD, an orthogonal matrix decomposition framework is proposed:
\begin{align}
	\mathbf{A} = \mathbf{X} \mathbf{D} \mathbf{Y}^{\top},
    \label{eq:svd_approx}
\end{align}
where $\mathbf{A}\in\mathbb{R}^{m\times n}$ is decomposed into matrices $\mathbf{X}\in\mathrm{SO}(m)$ and $\mathbf{Y}\in\mathbb{R}^{n\times n}$ whereas $\mathbf{D}\in\mathrm{GL}(m,\mathbb{R})$ is a diagonal matrix. %
Let $\mathbf{D}=\mathrm{diag}(\mathbf{d})$ denote the diagonal matrix with $\mathbf{d}\in\mathbb{R}^m$ on its diagonal, and let $\mathbf{d}=\mathrm{diag}^{-1}(\mathbf{D})=[d_1,d_2,\dots,d_m]^\top$ denote the vector of diagonal elements of~$\mathbf{D}$.

\begin{figure*}[!ht]
  \begin{subfigure}[t]{0.496\textwidth}
    \vspace{0pt} 
    \parbox[c][\ColumnHeight][c]{\linewidth}{%
      \centering
      \includegraphics[height=4.25cm,keepaspectratio]{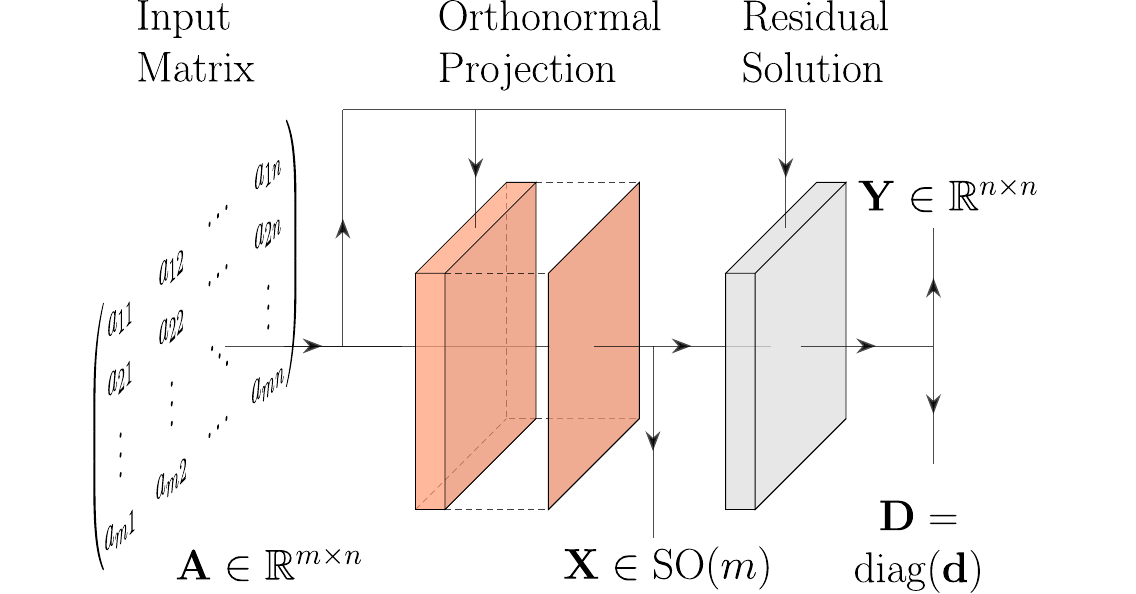}
    }
    \vspace{0.5em}
    \caption{Analytic model}\label{fig:arch:ablation:a}
  \end{subfigure}\hfill
  \begin{subfigure}[t]{0.496\textwidth}
    \vspace{0pt}
    \parbox[c][\ColumnHeight][c]{\linewidth}{%
      \centering
      \includegraphics[height=4.25cm,keepaspectratio]{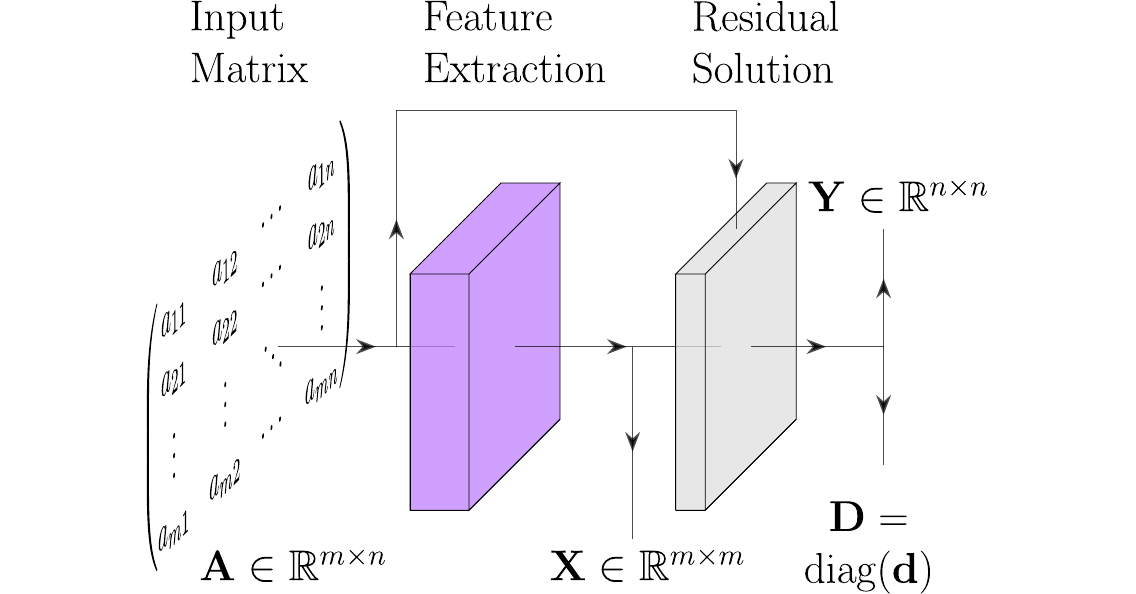}
    }
    \vspace{0.5em}
    \caption{Unconstrained neural net (UNN)}\label{fig:arch:ablation:b}
  \end{subfigure}\hfill
  \vspace{2.5em}
  \begin{subfigure}[t]{0.496\textwidth}
    \vspace{0pt}
    \parbox[c][\ColumnHeight][c]{\linewidth}{%
      \centering
      \includegraphics[height=4.25cm,keepaspectratio]{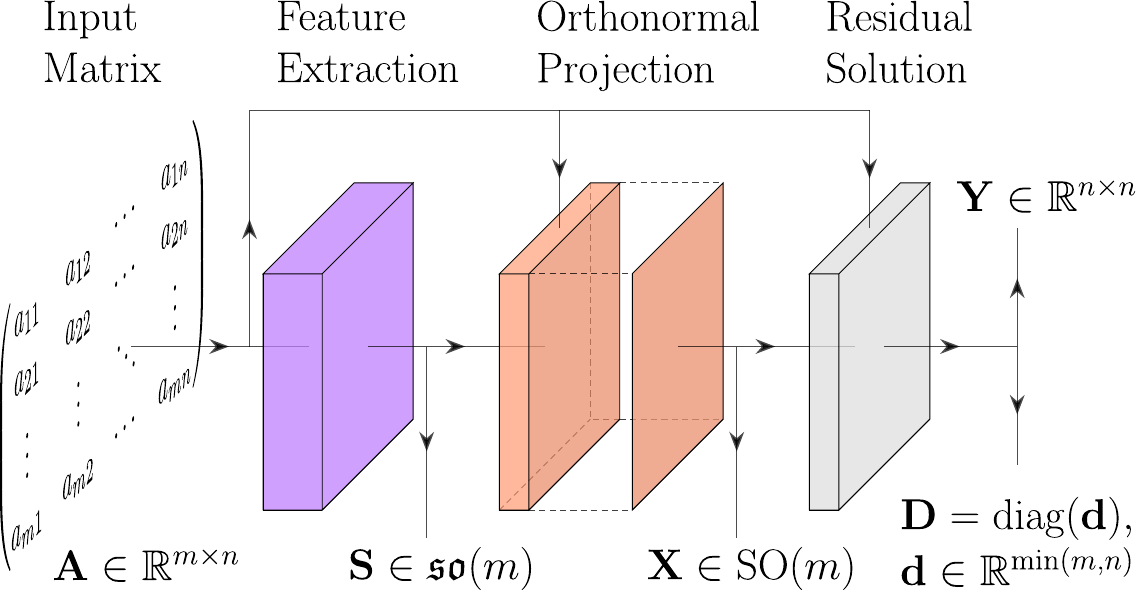}
    }
    \vspace{0.5em}
    \caption{Lie-based neural net (LieNN)}\label{fig:arch:ablation:c}
  \end{subfigure}\hfill
  \begin{subfigure}[t]{0.496\textwidth}
    \vspace{0pt}
    \parbox[c][\ColumnHeight][c]{\linewidth}{%
      \centering
      \includegraphics[height=4.25cm,keepaspectratio]{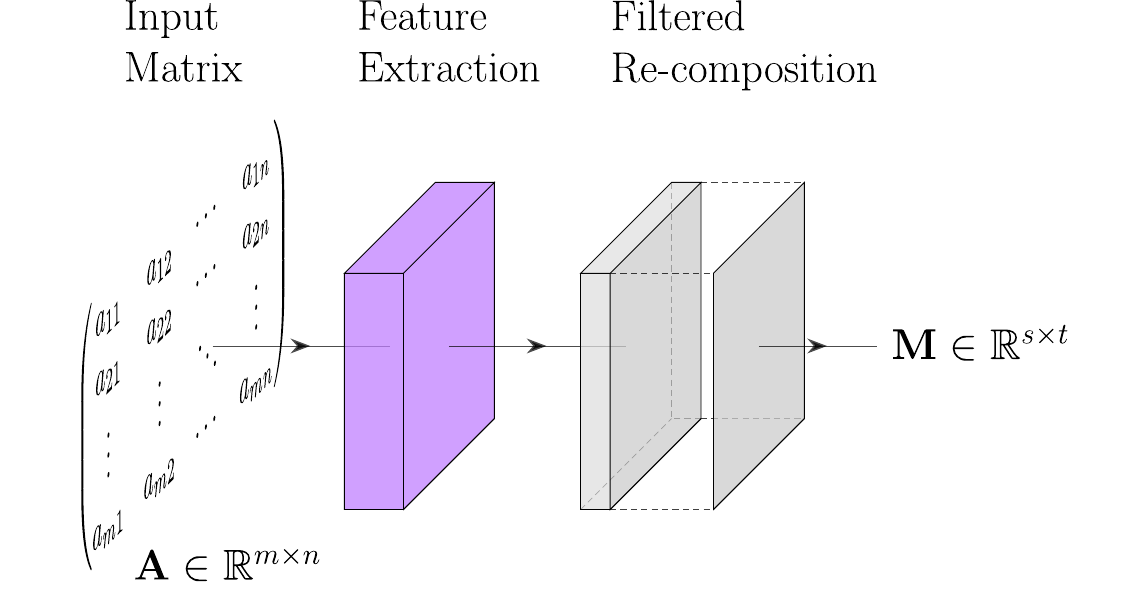}
    }
    \vspace{0.5em}
    \caption{End-to-end learning (E2E)}\label{fig:arch:ablation:d}
  \end{subfigure}
    \caption{\textbf{Quasi-SVD model variants.} The analytic model in (\subref{fig:arch:ablation:a}) provides a closed-form solution using the matrix exponential for orthonormal projection without learnable components. The unconstrained neural network (UNN) in (\subref{fig:arch:ablation:b}) introduces trainable weights and algebraic operations to factorise the input matrix. The Lie-based neural network (LieNN) in (\subref{fig:arch:ablation:c}) predicts skew-symmetric parameters mapped to an exact orthogonal factor via the matrix exponential, combining learned flexibility with hard orthogonality guarantees. The end-to-end (E2E) network in (\subref{fig:arch:ablation:d}) omits mathematical constraints entirely, producing only the composed matrix output $\mathbf{M}$. These model variants are used for an ablation analysis.}
    \label{fig:arch:ablation}
\end{figure*}

\subsection{Analytic decomposition}

Orthogonality is central in many factorisation schemes underlying SVD. The present formulation enforces this structure directly inside the corresponding Lie group and parametrises orthogonal matrices through the Lie algebra $\mathfrak{so}(m)$. A skew-symmetric matrix $\mathbf{S}\in\mathfrak{so}(m)$ ensures that $\exp(\mathbf{S})\in\mathrm{SO}(m)$, making the matrix exponential an analytic mechanism for generating valid left/right factors. Given a matrix $\mathbf{A}$, the construction proceeds as follows: %
\begin{align}
    \mathbf{X} = \exp\!\left(\mathbf{S}\right) \, , \quad \text{where} \quad \mathbf{S}=\mathbf{A}-\mathbf{A}^\top \, ,
\end{align}
and $\mathbf{S}+\mathbf{S}^\top=\mathbf{0}$ enforces skew-symmetry. The exponential map $\exp(\cdot):\mathfrak{so}(m)\mapsto\mathrm{SO}(m)$ guarantees $\mathbf{X}$ is orthogonal. For rectangular inputs ($m\neq n$), $\mathbf{A}$ is zero-padded along the smaller dimension 
to form a square matrix. 
\par
%
For higher-dimensional cases, the computational burden of the standard matrix exponential algorithm is efficiently reduced via Taylor expansion in: 
\begin{align}
    \exp\left(\mathbf{S}\right) \approx \sum_{n=0}^{k} \frac{\mathbf{S}^n}{n!},
\end{align}
which is truncated at order $k$. %
In the special case of \mbox{$m=3$}, Rodrigues' formula provides a closed-form expression:
\begin{align}
    \exp_r\!\left(\mathbf{S}\right) = \mathbf{I} + \frac{\sin\theta}{\theta}\mathbf{S}
    + \frac{1-\cos\theta}{\theta^2}\mathbf{S}^2 ,
\end{align}
where $\theta=\lVert\mathbf{S}\rVert_F/\sqrt{2}$ and $\lVert\cdot\rVert_F$ denotes the Frobenius norm. These methods yield a differentiable and algebraically grounded parameterization of orthogonal matrix components. 

From $\mathbf{X}$, the singular values are estimated via the orthogonal similarity transform (spectral invariance under conjugation):
\begin{align}
    \mathbf{d}=\mathrm{diag}^{-1}\left(\sqrt{\lvert\mathbf{X}^\top\mathbf{A}\mathbf{A}^\top\mathbf{X}\rvert}\right)
    \label{eq:diag_closed}
\end{align}
where the diagonal matrix is given by $\mathbf{D}=\mathrm{diag}(\mathbf{d})$. %
The right singular vectors $\mathbf{Y}$ are then obtained as:
\begin{align}
    \mathbf{Y}=\mathbf{A}^\top\mathbf{X}\mathbf{D}^{-1} 
    \label{eq:v_closed}
\end{align}
which completes the analytical factorisation. %
For rectangular inputs ($m>n$), the scheme is applied to $\mathbf{A}^\top$, after which the resulting factors are swapped back, such that singular values beyond the rank are implicitly forced to zero. %
\par

\subsection{Neural decomposition}

Extending on the work of SV-Learn~\cite{xu2022sv}, a viable design choice for full neural SVD prediction is to let a network $g_\theta$ predict singular vectors by:
\begin{align}
    \mathbf{X}_g=g_\theta\left(\mathbf{A}\right) \quad \text{where} \quad g_\theta: \mathbb{R}^{m\times n}\mapsto\mathbb{R}^{m\times m}
\end{align}
where $\mathbf{X}_g\in\mathbb{R}^{m\times m}$. To satisfy $\mathbf{A}=\mathbf{X}_g\mathbf{D}_g\mathbf{Y}_g^\top$, one borrows the concepts from Eqs.~\eqref{eq:diag_closed} and \eqref{eq:v_closed} to compute $\mathbf{D}_g$ and $\mathbf{Y}_g$, respectively. This unconstrained model is illustrated in Fig.~\ref{fig:arch:ablation:b}.

\subsection{Lie-informed neural decomposition}

To enforce matrix orthogonality in a learning context, a neural model $f_\theta$ predicts the upper triangle of the skew-symmetric matrix $\mathbf{s}$ via:
\begin{align}
    \mathbf{s} = f_\theta\left(\mathbf{A}\right) \quad \text{where} \quad f_\theta: \mathbb{R}^{m\times n}\mapsto\mathbb{R}^{L} \label{eq:neural}
\end{align}
where $\mathbf{s}=\left[s_1, s_2, \dots, s_L\right]^\top$ contains $L$ elements that form the upper triangle of the skew-symmetric matrix by:
\begin{align}
F(\mathbf{s}) =
\begin{bmatrix}
0      & s_1    & s_2    & \cdots & s_{m-1} \\
-s_1   & 0      & s_m    & \cdots & \vdots \\
-s_2   & -s_m   & 0      & \cdots & \vdots \\
\vdots & \vdots & \vdots & \ddots & s_L     \\
-s_{m-1} & \cdots & \cdots & -s_L & 0
\end{bmatrix}
\end{align}
for the mapping $F:\mathbb{R}^{L}\mapsto\mathfrak{so}(m)$. From this, the Lie-based neural network (LieNN) predicts the matrix $\mathbf{X}_f$ via:
\begin{align}
    \mathbf{X}_f &= \exp\bigl(F(\mathbf{s})\bigr) \, ,
\end{align}
which is then used to compute $\mathbf{D}_f$ and $\mathbf{Y}_f$ analogous to \cref{eq:diag_closed,eq:v_closed}. Implementation variants for the matrix exponential are detailed in Sec.~\ref{sec:complexity}.

\begin{lemma}[Asymmetric orthogonality suffices]
For any $\mathbf{A}=\mathbf{U}\mathbf{\Sigma}\mathbf{V}^\top$ with $\det(\mathbf{U})=-1$, jointly 
flipping the sign of one column of $\mathbf{U}$ and the corresponding column of $\mathbf{V}$ 
yields $\tilde{\mathbf{U}}\in\mathrm{SO}(m)$ with $\mathbf{A}=\tilde{\mathbf{U}}\mathbf{\Sigma}\tilde{\mathbf{V}}^\top$ 
unchanged. A left factor in $\mathrm{SO}(m)$, expressible as $\exp(\mathbf{S})$ for skew-symmetric 
$\mathbf{S}$, therefore always exists, while the corresponding right factor may lie in the 
determinant $(-1)$ component of $\mathrm{O}(n)$ and is recovered only approximately. Full proof 
in Supp.~Sec.~\ref{sec:asymmetric}.
\end{lemma}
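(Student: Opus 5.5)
The plan is to build the sign-flip explicitly and then check three things in turn: the factorisation is unchanged, the new left factor lies in $\mathrm{SO}(m)$, and every element of $\mathrm{SO}(m)$ is an exponential.

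\textbf{Construction.} Write the full SVD $\mathbf{A}=\mathbf{U}\mathbf{\Sigma}\mathbf{V}^\top$ with $\mathbf{U}\in\mathrm{O}(m)$, $\mathbf{V}\in\mathrm{O}(n)$ and $\mathbf{\Sigma}\in\mathbb{R}^{m\times n}$. Fix an index $j\le\min(m,n)$ and set $\mathbf{J}_m=\mathrm{diag}(1,\dots,-1,\dots,1)\in\mathrm{O}(m)$, with the $-1$ in position $j$. Define $\mathbf{J}_n$ analogously, and put $\tilde{\mathbf{U}}=\mathbf{U}\mathbf{J}_m$ and $\tilde{\mathbf{V}}=\mathbf{V}\mathbf{J}_n$. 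Both $\mathbf{J}$'s are involutions, and $\mathbf{J}_m\mathbf{\Sigma}\mathbf{J}_n=\mathbf{\Sigma}$: entry $(i,k)$ is multiplied by $(\mathbf{J}_m)_{ii}(\mathbf{J}_n)_{kk}$, and the only nonzero entries have $i=k$, where the two signs cancel. Hence
\[
\tilde{\mathbf{U}}\mathbf{\Sigma}\tilde{\mathbf{V}}^\top=\mathbf{U}\mathbf{J}_m\mathbf{\Sigma}\mathbf{J}_n\mathbf{V}^\top=\mathbf{A}.
\]
The determinant becomes $\det\tilde{\mathbf{U}}=\det\mathbf{U}\cdot(-1)=1$, so $\tilde{\mathbf{U}}\in\mathrm{SO}(m)$.

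\textbf{Edge case $m>n$.} One could instead flip a column of $\mathbf{U}$ with index $j>n$. That column multiplies a zero row of $\mathbf{\Sigma}$, so the flip leaves $\mathbf{A}$ unchanged and no change to $\mathbf{V}$ is needed. I will note this, but the stated joint flip already covers the general case.

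\textbf{Exponential representation.} Next I show that $\tilde{\mathbf{U}}=\exp(\mathbf{S})$ for some skew-symmetric $\mathbf{S}$, i.e.\ that $\exp:\mathfrak{so}(m)\to\mathrm{SO}(m)$ is surjective. I would prove this directly from the real normal form of orthogonal matrices. There is a $\mathbf{Q}\in\mathrm{O}(m)$ with $\mathbf{Q}^\top\tilde{\mathbf{U}}\mathbf{Q}$ block diagonal, made of $2\times2$ rotations $R(\theta_k)$ and $\pm1$ entries. Because $\det=1$, the $-1$ entries come in pairs, and each pair equals $R(\pi)$. Each rotation satisfies
\[
R(\theta)=\exp\!\begin{bmatrix}0&-\theta\\ \theta&0\end{bmatrix},
\]
so assembling these blocks gives a skew $\mathbf{B}$ with $\exp(\mathbf{B})=\mathbf{Q}^\top\tilde{\mathbf{U}}\mathbf{Q}$. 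Setting $\mathbf{S}=\mathbf{Q}\mathbf{B}\mathbf{Q}^\top$, which is still skew, gives $\exp(\mathbf{S})=\tilde{\mathbf{U}}$.

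\textbf{Right factor.} Finally, $\det\tilde{\mathbf{V}}=-\det\mathbf{V}$ can take either sign. The flip trick cannot fix $\tilde{\mathbf{V}}$ as well, because fixing it would require flipping a $\mathbf{V}$-column with no partner in $\mathbf{U}$. That is possible only when $n>m$, using a column in the null directions. Otherwise the right factor may genuinely lie in the $\det=-1$ component. Since that component is not in the image of $\exp$ (which is connected and contains $\mathbf{I}$), $\mathbf{Y}$ is not parametrised by a Lie exponential. It is obtained from Eq.~\eqref{eq:v_closed} and is only approximately orthogonal, which justifies the asymmetric design.

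The main obstacle is surjectivity of the exponential in the third step; I expect the normal-form argument to settle it cleanly. Everything else is sign bookkeeping with $\mathbf{\Sigma}$.
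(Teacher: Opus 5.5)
Your proof is correct and uses the same core idea as the paper's supplementary proof: a paired sign flip, with multiplicativity of $\det$ giving $\det\tilde{\mathbf{U}}=1$. It is more careful than the paper's version in two places.

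\textbf{Column flips.} You flip \emph{columns} by right-multiplication, $\tilde{\mathbf{U}}=\mathbf{U}\mathbf{J}_m$ and $\tilde{\mathbf{V}}=\mathbf{V}\mathbf{J}_n$, and you justify that $\mathbf{A}$ is unchanged via $\mathbf{J}_m\mathbf{\Sigma}\mathbf{J}_n=\mathbf{\Sigma}$. The supplement instead writes $\tilde{\mathbf{X}}=\operatorname{diag}(\mathbf{b}_X)\mathbf{X}$ and $\tilde{\mathbf{Y}}=\operatorname{diag}(\mathbf{b}_Y)\mathbf{Y}$, and appeals only to $\operatorname{diag}(\mathbf{b})^2=\mathbf{I}$. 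Taken literally, that flips a \emph{row}, and then $\tilde{\mathbf{X}}\mathbf{D}\tilde{\mathbf{Y}}^\top=\operatorname{diag}(\mathbf{b}_X)\,\mathbf{A}\,\operatorname{diag}(\mathbf{b}_Y)\neq\mathbf{A}$ in general. Your version is the one that matches the statement's ``flip one column of $\mathbf{U}$ and the corresponding column of $\mathbf{V}$.''

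\textbf{Surjectivity of the exponential.} The supplement only proves the forward direction, that $\exp(\mathbf{S})$ is orthogonal. It then tacitly uses the converse when it claims the left factor is ``expressible as $\exp(\mathbf{S})$.'' Your real-normal-form argument supplies that converse: pair the $-1$ entries into $R(\pi)$ blocks, take the logarithm blockwise, and conjugate back by $\mathbf{Q}$. So the existence claim is actually proved in your version, not assumed.

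\textbf{A correction to your right-factor paragraph.} Joint flips preserve $\det\mathbf{U}\det\mathbf{V}$. However, both factors can still be put in $\mathrm{SO}$ in more cases than $n>m$:
\begin{itemize}
\item when $m>n$, by flipping an unpaired column of $\mathbf{U}$, exactly as in your own edge-case remark;
\item when $\mathbf{A}$ has a zero singular value, by flipping that $\mathbf{u}_j$ or $\mathbf{v}_j$ alone.
\end{itemize}
Also, ``the flip trick cannot fix it'' does not show that no suitable decomposition exists. A clean witness for ``may lie in the $\det=-1$ component'' is a square, full-rank $\mathbf{A}$ with $\det\mathbf{A}<0$. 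From $\det\mathbf{A}=\det\tilde{\mathbf{U}}\,\det\mathbf{\Sigma}\,\det\tilde{\mathbf{V}}$ and $\det\mathbf{\Sigma}>0$, any such factorisation with $\tilde{\mathbf{U}}\in\mathrm{SO}(m)$ forces $\det\tilde{\mathbf{V}}=-1$. Your connectedness remark then shows that this $\tilde{\mathbf{V}}$ is not an exponential of a skew-symmetric matrix.
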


This justifies treating $\mathbf{X}_f$ and $\mathbf{Y}_f$ asymmetrically during training: $\mathbf{X}_f$ is exactly orthogonal by construction, while $\mathbf{Y}_f$ is only encouraged towards orthogonality through a soft penalty. The mathematical properties on singular values and right-factor orthogonality are therefore left as soft constraints using the following learning objectives:
\begin{align}
    \mathcal{L}_{\sigma}=\left\lVert \mathrm{diag}^{-1}(\mathbf{\Sigma}) - \mathrm{diag}^{-1}(\mathbf{D}_f)\right\rVert_2^2 \label{eq:loss:sigma}
\end{align}
with a separate off-diagonal zero enforcement loss:
\begin{align}
    \mathcal{L}_{\text{off}}=\left\lVert \mathbf{D}_f - \mathrm{diag}(\mathrm{diag}^{-1}(\mathbf{D}_f))\right\rVert_2^2 \label{eq:loss:off}
\end{align}
and an orthogonality penalty on $\mathbf{Y}_f$:
\begin{align}
    \mathcal{L}_{\text{ort}}=\left\lVert \mathbf{Y}_f\mathbf{Y}_f^{\top}-\mathbf{I}\right\rVert_2^2 \label{eq:loss:ort}
\end{align}
where $\mathbf{I}\in\mathbb{R}^{n\times n}$ is an identity matrix. These losses are aggregated for a total loss given by:
\begin{align}
    \mathcal{L}_{T}= \lambda_1 \mathcal{L}_{\sigma} + \lambda_2 \mathcal{L}_{\text{off}} + \lambda_3 \mathcal{L}_{\text{ort}}
    \label{eq:loss:total}
\end{align}
where $(\lambda_1, \lambda_2, \lambda_3)$ are the regularization constants.

\subsection{End-to-end learning}

Many applications of the SVD involve reconstructing a filtered version $\mathbf{M}$ of an input matrix $\mathbf{A}$. For instance, the Lu-Chipman decomposition in Mueller matrix imaging produces $\mathbf{M}=\mathbf{U}\mathbf{D}\mathbf{V}^\top$ after factorisation where $\mathbf{D}=\mathrm{diag}(\mathbf{d})$ and $\mathbf{d}\in\{-1,1\}^m$~\cite{lu1996interpretation}. This offers the opportunity to learn the overarching objective of an SVD as a function $E_\theta: \mathbb{R}^{m\times n}\mapsto \mathbb{R}^{s\times t}$. Here, the loss simplifies to:
\begin{align}
    \mathcal{L}_{\text{end}}\left(\mathbf{M}, \mathbf{A}\right)=\left\lVert \mathbf{M} - E_\theta(\mathbf{A})\right\rVert_2^2\label{eq:loss:end2end}
\end{align}
as the distance between the network's prediction and final reconstruction matrix $\mathbf{M}$ for which the SVD is used in the processing chain. \par

A key advantage of this end-to-end approach is that operations beyond the SVD can be learned implicitly by $E_\theta(\mathbf{A})$, potentially improving computational efficiency. However, this comes at the cost and may complicate training. The network must discover underlying data patterns from scratch, which can lead to unstable convergence or limited generalization to out-of-distribution samples. After all, the lack of explicit decomposed components reduces interpretability.  

\subsection{Neural module architectures}

Each learned model variant employs an MLP with a single hidden layer of size 
$h=\max(64,m/2)$, where $m$ determines input and output dimensions. To emulate iterative convergence of classical SVD solvers, the MLP is optionally augmented with a three-step RNN that refines input matrix $\mathbf{A}$, where gated recurrent units (GRUs) are used due to balanced accuracy and efficiency~\cite{Foucault:GRU:21,HASSAAN2025112534}. 

\subsection{Computational complexity}
\label{sec:complexity}

The following analysis considers Quasi-SVD under the work--span model~\cite{blelloch1996programming} (EREW PRAM) for $m \le n$, highlighting its suitability for SIMT (Single Instruction, Multiple Threads) architectures. Unlike classical solvers limited by sequential steps, Quasi-SVD exploits a parallel computation graph comprising multiple stages: MLP of width $h$, skew-symmetric scattering, matrix exponentiation, and a one-sided orthogonal decomposition. Table~\ref{tab:complexity} summarises the comparison with classical baselines. 
\begin{table*}[!h]
  \centering
  \caption{Work--span complexity~\cite{blelloch1996programming}.
    Full proofs are found in Supp.~\ref{sec:supp:complexity}.}
  \label{tab:complexity}
  \resizebox{.66\linewidth}{!}{%
  \begin{tabular}{@{}lcc@{}}
    \toprule
    \textbf{Method} & \textbf{Work} & \textbf{Span} \\
    \midrule
    Golub--Reinsch~\cite{golub1971singular}
      & $\mathcal{O}(mn^2)$ & $\mathcal{O}(n\log n)$ \\
    Truncated SVD (via bidiagonalization)~\cite{hansen1987truncated}
      & $\mathcal{O}(mnK)$ & $\mathcal{O}(K\log n)$ \\
    Randomised SVD (rSVD)~\cite{halko2011finding}
      & $\mathcal{O}(mn\ell \!+\! n\ell^2)$ & $\mathcal{O}(\ell \log n)$ \\
    Lanczos (rank $K$)~\cite{golub2013matrix}
      & $\mathcal{O}(mnK)$  & $\mathcal{O}(K\log n)$ \\
    Subspace (block) iteration~\cite{musco2015randomized} 
      & $\mathcal{O}(mnK)$ & $\mathcal{O}(K\log n)$ \\
    \midrule
    \textbf{Quasi-SVD}: LieNN by Cayley
      & $\mathcal{O}(mn^2)$  & $\mathcal{O}(m\log m)$ \\
    \textbf{Quasi-SVD}: LieNN by Rodrigues, $m=n=3$
      & $\mathcal{O}(mn^2)$  & $\mathcal{O}(\log m)$ \\
    \textbf{Quasi-SVD}: LieNN by Taylor
      & $\mathcal{O}(mn^2)$  & $\mathcal{O}(\log m)$ \\
    \bottomrule
  \end{tabular}}
\end{table*}
While the Cayley-transform's LU solve dictates the span for general cases, employing a constant-degree Taylor expansion or the closed-form Rodrigues formula ($m\!=\!n\!=\!3$) reduces the span to $\mathcal{O}(\log m)$. 
By default, a 9-term Taylor expansion is employed for general matrices, whereas the Rodrigues formula is used for $\mathbf{A}\in\mathbb{R}^{3\times3}$.

\section{Experiments}
\label{sec:exp}

\subsection{Datasets}

The selected datasets represent complementary SVD workload regimes. The polarimetric dataset requires performing a very large number of small decompositions, whereas the ultrasound dataset involves fewer but substantially larger matrices. Together, these cases capture the range of memory and performance trade-offs targeted by the proposed method.

\paragraph{Polarimetric data} Imaging modalities often rely on small SVDs to normalise or extract features by replacing singular values and reconstructing local measurements. For example, %
MMP is an imaging technique with growing diagnostic relevance in medical imaging~\cite{sampaio2023muller,moriconi2024near}. 
For visualization, the Mueller matrix is typically decomposed via SVD following Lu and Chipman~\cite{lu1996interpretation}, enabling interpretation of polarimetric features. 
In neuropathology, the per-pixel $i$ azimuth angle $\varphi_i=\sfrac{1}{2}\tan^{-1}(\sfrac{M_i^{(2,4)}}{M_i^{(4,3)}})$ derived from $\mathbf{M}_i$ in Eq.~\eqref{eq:loss:end2end} encodes fiber orientation, a biomarker for tumor identification~\cite{Hahne:25:OpEx}. Although near real-time MMP has been demonstrated~\cite{moriconi2024near}, instant visualization of Lu–Chipman features remains challenging.

The proposed Quasi-SVD model is evaluated on two datasets: the NeuroPathoPol (NPP) and the public mouse uterine cervix (MUC) data~\cite{novikova:24:mouse,pogudingleb:25}. 
NPP comprises \mbox{$388\times516$-pixel} images, each pixel associated with a $4\times4$ Mueller matrix $\mathbf{B}_i = B_i^{(u,v)}$. Training and validation use 12 tumor and 9 non-tumor brain samples, while testing employs 5 tumor and 3 healthy samples. 
For each pixel $i$, a normalised input matrix $\mathbf{A}_i\in\mathbb{R}^{3\times3}$ is derived from $\mathbf{B}_i$ as
\begin{align}
    \mathbf{A}_i = \frac{1}{B_i^{(1,1)}}
    \begin{pmatrix}
        B_i^{(2,2)} & B_i^{(2,3)} & B_i^{(2,4)} \\
        B_i^{(3,2)} & B_i^{(3,3)} & B_i^{(3,4)} \\
        B_i^{(4,2)} & B_i^{(4,3)} & B_i^{(4,4)}
    \end{pmatrix}.
\end{align}
Data augmentation through random rotation and flipping increases the diversity of training samples~\cite{Hahne:25:TiP}.

\paragraph{Ultrasound data} Many imaging applications employ the SVD for background subtraction.
In ULM, the SVD is a key processing step to suppress static reflectors (e.g., bone) and isolate microbubble flow, commonly realised through the Karhunen–Lo\`eve transform (KLT)~\cite{demene2015spatiotemporal,baranger2018adaptive,heiles2022pala}. 
Given a Casorati matrix \mbox{$\mathbf{B}_c\in\mathbb{R}^{h\times n}$} constructed from $n$ frames with $h$ pixels, each $\mathbf{B}_c$ forms the Hermitian matrix $\mathbf{A}=\mathbf{B}_c^\top\mathbf{B}_c$ and uses its singular vectors for subspace denoising. \par

Evaluation uses publicly available \textit{in vivo} ULM data from the PALA study~\cite{heiles2022pala}, comprising rat brain perfusion sequences acquired with a 15.6~MHz linear probe (128~elements, 0.1~mm pitch) at 1~kHz frame rate with multi-angle plane waves. Beamformed B-mode data from \textit{rat 18} is employed, containing 250 sequences of 500 frames each. To demonstrate feasibility, the first 20 sequences are used for training and validation (0.9 split), and the final 5 sequences for testing. After beamforming, each B-mode image has a resolution of $143\times167$ pixels, yielding an input matrix $\mathbf{A}\in\mathbb{R}^{500\times500}$ when 500 frames are stacked.

\subsection{Metrics}
\label{ssec:metrics}

Reconstruction quality and decomposition accuracy are reported using both perceptual and numerical metrics. The structural similarity index (SSIM) measures local luminance, contrast, and structure using an 11-pixel Gaussian window. 
For matrix-level assessment, a relative Frobenius error is defined as $\|\mathbf{D}\|_F^{\text{rel}}=\|\mathbf{D}-\mathbf{D}^{(gt)}\|_F/\|\mathbf{D}^{(gt)}\|_F$ and $\|\mathbf{Y}\|_F^{\text{ort}}=\|\mathbf{Y}\mathbf{Y}^{\top}-\mathbf{I}\|_F/\|\mathbf{I}\|_F$ from the Frobenius norm $\|\cdot\|_F$ with $^{(gt)}$ denoting the ground truth (GT) and $\mathbf{I}$ the identity matrix. 

\subsection{Training and testing}

Models are trained using AdamW with gradient clipping and cosine-annealed learning rates converging to zero. Weights are initialised with Xavier (Glorot) uniform for linear layers, biases drawn from \(\mathcal{N}(0,10^{-6})\), and batch‑norm weights/biases set to \(1\) and \(0\), respectively. Checkpoints are selected by lowest validation metrics as detailed below.
\begin{itemize}
    \item Polarimetric data: Nvidia RTX 4090, batch size 776, 40 epochs, initial learning rate $1\mathrm{e}{-3}$; best checkpoint by lowest \(\mathcal{L}_{\textrm{ort}}\) from Eq.~\eqref{eq:loss:ort}, except when training with \(\mathcal{L}_{\text{end}}\), where the validation \(\lVert\varphi_i-\varphi_i^{(gt)}\rVert_2^2\) governs selection.
    \item Ultrasound data: Nvidia H100, batch size 1, 5 epochs, initial learning $1\mathrm{e}{-5}$; best checkpoint by lowest \(\mathcal{L}_T\) from Eq.~\eqref{eq:loss:total}, except when training with \(\mathcal{L}_{\text{end}}\) from Eq.~\eqref{eq:loss:end2end}, where the validation \(\mathcal{L}_{\text{end}}\) governs selection.
\end{itemize}

\subsection{Comparison methods}
The proposed framework is evaluated against classical and learning-based SVD techniques:
\begin{itemize}
    \item cuSOLVER (GT): Reference implementation using one-sided Jacobi SVD for small matrices ($m,n\le32$) and traditional decomposition (bidiagonalization plus QR, divide-and-conquer) for larger matrices~\cite{golub1971singular,nvidia2025cusolver}. Treated as the GT for accuracy and orthogonality evaluation.
    \item SV-Learn: MLP-based predictor of singular values only, without singular vector reconstruction~\cite{xu2022sv}.
    \item End-to-end (E2E) learning: Differentiable networks trained to directly reconstruct application-specific SVD outputs (e.g., MRI coil compression)~\cite{cole2022learned}.
    \item rSVD: Stochastic low-rank approximation with three power iterations, implemented in PyTorch~\cite{halko2011finding}.
\end{itemize}
For ablation studies, the analytical and UNN variants are also included. All learnable models are optionally extended with an RNN. This setup spans classical, stochastic, and neural methods, providing comprehensive baselines to compare accuracy, orthogonality, and computational efficiency.

\subsection{Materials availability}
The MUC and PALA datasets are
publicly available at~\cite{novikova:24:mouse,pogudingleb:25} and~\cite{heiles2022pala}, respectively. The NPP dataset is available upon reasonable request. For reproducibility, source code for all model variants and training pipelines will be made available at
\texttt{*placeholder*} 
.

\subsection{Results}
This section outlines general experimental trends followed by domain-specific outcomes. Numerical evaluations for both MMP and ULM are summarised in Tables~\ref{tab:metrics:npp}~and~\ref{tab:metrics:ulm}, with qualitative comparisons in Figs.~\ref{fig:azimuth}~and~\ref{fig:ulm}. 

\begin{table*}[!h]
    \caption{\textbf{NPP test set results for Lu–Chipman decomposition.} Scores are reported as mean~$\pm$~std across frames ($\text{--}$ for not applicable; std omitted when zero). \textbf{Bold} is best non-GT score and \colorbox{gray!15}{shaded} second-best score. GT is excluded from ranking.\label{tab:metrics:npp}}
    \centering
    \resizebox{\textwidth}{!}{%
\begin{tabular}{l c S[table-format=2.2(2),detect-all] S[table-format=2.2(2),detect-all] S[table-format=2.2(2),detect-all] S[table-format=2.2(2),detect-all] S[table-format=2.2(2),detect-all] S[table-format=2.2(2),detect-all] S[table-format=2.2(2),detect-all]}
\toprule
Method & RNN & $\overline{\mathrm{SSIM}(\mathbf{M})}\uparrow$ & $\overline{\|\mathbf{X}\|_F^{\text{ort}}}\downarrow$ & $\overline{\|\mathbf{D}\|_F^{\text{rel}}}\downarrow$ & $\overline{\|\mathbf{Y}\|_F^{\text{ort}}}\downarrow$ & $\text{Time}~\text{[ms]}\downarrow$ \\
\midrule
GT~\cite{nvidia2025cusolver} & \ding{55} & 1.00(0.00) & 0.00(0.00) & 0.00(0.00) & 0.00(0.00) & 36.98(1.81) \\
rSVD~\cite{halko2011finding} & \ding{55} & 0.15(0.05) & \bfseries 0.00(0.00) & \bfseries 0.00(0.00) & \bfseries 0.00(0.00) & {2.12e4±92.42} \\
SV-Learn~\cite{xu2022sv} & \ding{55} & 0.65(0.08) & \text{--} & 0.50(0.01) & \text{--} & \cellcolor{gray!15} 1.89(0.00) \\
E2E~\cite{cole2022learned} & \ding{55} & 0.75(0.07) & \text{--} & \text{--} & \text{--} & \bfseries 0.66(0.00) \\
\midrule
Analytic & \ding{55} & 0.66(0.08) & \bfseries 0.00(0.00) & 0.09(0.04) & 0.46(0.08) & 8.48(0.02) \\
UNN & \ding{55} & 0.82(0.05) & 0.13(0.04) & 0.03(0.01) & 0.23(0.07) & 9.85(0.01) \\
LieNN & \ding{55} & 0.83(0.06) & \bfseries 0.00(0.00) & 0.03(0.01) & 0.24(0.07) & 9.93(0.02) \\
\midrule
E2E & \ding{51} & \bfseries 0.98(0.01) & \bfseries \text{--} & \text{--} & \text{--} & 4.91(0.00) \\
UNN & \ding{51} & 0.85(0.04) & 0.14(0.03) & 0.02(0.00) & \cellcolor{gray!15} 0.08(0.02) & 13.81(0.01) \\
LieNN & \ding{51} & \cellcolor{gray!15} 0.94(0.03) & \bfseries 0.00(0.00) & \cellcolor{gray!15} 0.01(0.00) &  0.10(0.03) & 14.19(0.02) \\
\bottomrule
\end{tabular}
    }
\end{table*}
\begin{figure*}[!ht]
    \centering
    \begin{minipage}[c]{.93\linewidth}
        \begin{subfigure}[c]{\linewidth}
            \includegraphics[width=\linewidth]{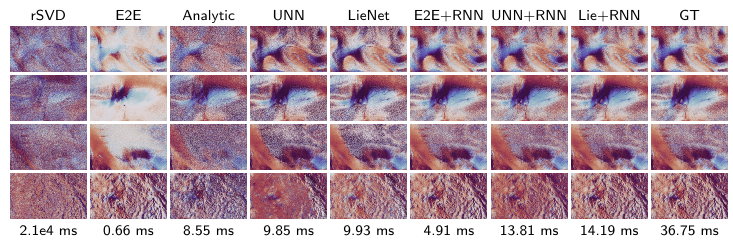}
            \caption{}\label{fig:azimuth:a}
        \end{subfigure}
        \begin{subfigure}[c]{\linewidth}
            \includegraphics[width=\linewidth]{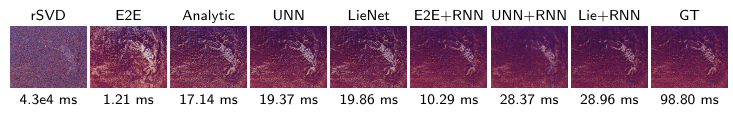}
            \caption{}\label{fig:azimuth:b}
        \end{subfigure}
    \end{minipage}
    \hfill
    \begin{minipage}[c]{.06\linewidth}
        \includegraphics[width=\linewidth]{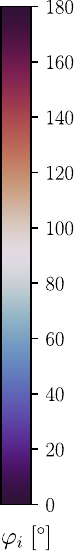}
    \end{minipage}
    \caption{\textbf{MMP benchmark comparison.} The images show the per-pixel azimuth $\varphi_i$ decomposed by the various models. (\subref{fig:azimuth:a}) contains azimuth frames (388 × 516 pixels) from the NPP test set (remaining test samples are found in Supp.~\ref{fig:azimuth:remaining}). 
    (\subref{fig:azimuth:b}) shows the azimuth results for the mouse uterine cervix image (600 × 700 pixels)~\cite{novikova:24:mouse,pogudingleb:25} for which the same models were used to investigate a model's ability to adapt to domain gaps.\label{fig:azimuth}}
\end{figure*}
\paragraph{General Findings} While RNN-enabled models achieve the highest reconstruction fidelity (SSIM \if, PSNR,\fi and Frobenius errors), LieNN provides the best factorised-matrix approximation, halving SVD runtime for MMP and reaching a 20$\times$ speed-up for ULM. Without RNNs, LieNN delivers the strongest performance, matching traditional methods' left-sided orthogonality accuracy at 3$\times$ faster processing, whereas non-RNN end-to-end models are fastest. The Lie-theoretic constraint introduces negligible overhead compared to UNN, confirming that structural enforcement preserves real-time performance while simultaneously enhancing reconstruction accuracy. Conversely, the analytic variant excels on SV-Learn~\cite{xu2022sv} metrics but underperforms relative to learned modules. Energy-based models indicated no accuracy gains but incurred additional computational costs~(see Supp.~\ref{supp:sec:attempts}). \par
\paragraph{Results for MMP at $\mathbf{A}\in\mathbb{R}^{3\times3}$} %
Neural components learn to approximate the algebraic operations of the Lu–Chipman decomposition~\cite{lu1996interpretation}, suggesting the potential to replace symbolic computations with accelerated abstractions. End-to-end modules most effectively cover these decomposition and filter steps, explaining their speed advantage over the non-iterative analytic method. However, these end-to-end strategies do not produce explicit matrix factors, limiting their utility in applications requiring spectral analysis. %
Figure~\ref{fig:azimuth} shows the per-pixel $i$ azimuth $\varphi_i\in\mathbb{R}$ image maps for the two polarimetric datasets. The analytic model reasonably approximates GT values despite appearing noisy, demonstrating that a deterministic, first-order SVD approximation can provide competitive quality which is useful when computational budgets are limited. Among learnable models without RNNs, the Lie-informed network performs best, though small local errors persist, and the end-to-end model shows a consistent offset. With RNNs, all learnable models closely track GT, with only minor deviations in the UNN. Figure~\ref{fig:azimuth:b} presents azimuth results on data from a different MMP instrument, testing domain transfer across optical setups. Without RNNs, the end-to-end model shows the largest deviations with offset artifacts indicating numerical instability. Activating the RNN, networks best reproduce the GT patterns, supporting the trends observed in Tables~\ref{tab:metrics:npp}~and~\ref{tab:metrics:mouse}. \par
As MMP-trained model variants predict per-pixel components without spatial context, they qualify for robust domain transfer. Table~\ref{tab:metrics:mouse} reports out-of-distribution results from a separate MMP instrument with different optical magnification and resolution~\cite{novikova:24:mouse,pogudingleb:25}. This domain gap analysis indicates consistent relative performance, although all learned approaches sacrifice absolute accuracy.
\begin{table*}[!ht]
    \centering
    \caption{\textbf{Domain shift analysis} using the public MUC image data~\cite{novikova:24:mouse,pogudingleb:25}.\label{tab:metrics:mouse}}
    \resizebox{\textwidth}{!}{%

\begin{tabular}{l c S[table-format=2.2(2),detect-all] S[table-format=2.2(2),detect-all] S[table-format=2.2(2),detect-all] S[table-format=2.2(2),detect-all] S[table-format=2.2(2),detect-all] S[table-format=2.2(2),detect-all] S[table-format=2.2(2),detect-all]}
\toprule
Method & RNN & $\overline{\mathrm{SSIM}(\mathbf{M})}\uparrow$ & $\overline{\|\mathbf{X}\|_F^{\text{ort}}}\downarrow$ & $\overline{\|\mathbf{D}\|_F^{\text{rel}}}\downarrow$ & $\overline{\|\mathbf{Y}\|_F^{\text{ort}}}\downarrow$ & $\text{Time}~\text{[ms]}\downarrow$ \\
\midrule
GT~\cite{nvidia2025cusolver} & \ding{55} & 1.00(0.00) & 0.00(0.00) & 0.00(0.00) & 0.00(0.00) & 98.80(0.00) \\
rSVD~\cite{halko2011finding} & \ding{55} & 0.10(0.00) & \text{--} & \bfseries 0.00(0.00) & \text{--} & {4.33e4}\hfill \\ 
SV-Learn~\cite{xu2022sv} & \ding{55} & 0.41(0.00) & \text{--} & 0.55(0.00) & \text{--} & \cellcolor{gray!15} 3.14(0.00) \\
E2E~\cite{cole2022learned} & \ding{55} & 0.50(0.00) & \text{--} & \text{--} & \text{--} & \bfseries 1.21(0.00) \\
\midrule
Analytic & \ding{55} & 0.69(0.00) & \bfseries 0.00(0.00) & 0.14(0.00) & 0.40(0.00) & 16.90(0.00) \\
UNN & \ding{55} & 0.36(0.00) & 0.21(0.00) & 0.37(0.00) & 0.34(0.00) & 19.37(0.00) \\
LieNN & \ding{55} & 0.74(0.00) & \bfseries 0.00(0.00) & 0.13(0.00) & 0.38(0.00) & 19.86(0.00) \\
\midrule
E2E & \ding{51} & \bfseries 0.89(0.00) & \text{--} & \text{--} & \text{--} & 10.29(0.00) \\
UNN & \ding{51} & 0.68(0.00) & 0.38(0.00) & 0.11(0.00) & \bfseries 0.16(0.00) & 28.37(0.00) \\
LieNN & \ding{51} & \cellcolor{gray!15} 0.83(0.00) & \bfseries 0.00(0.00) & \cellcolor{gray!15} 0.07(0.00) & \cellcolor{gray!15} 0.25(0.00) & 28.96(0.00) \\
\bottomrule
\end{tabular}
    }
\end{table*}
%
\paragraph{Results for ULM at $\mathbf{A}\in\mathbb{R}^{500\times500}$} %
Table~\ref{tab:metrics:ulm} reports quantitative results for ULM spatio-temporal decomposition. The LieNN with RNN achieves the highest SSIM\if and PSNR\fi, closely approximating the GT while preserving orthogonality and low relative errors across factors. This confirms that incorporating Lie constraints effectively guides the network toward physically meaningful decompositions, even in the presence of highly dynamic ultrasound data. Unlike in MMP, the end-to-end model without RNN performs poorly, indicating that task-specific structure is more consequential in ULM. \par%
%
\begin{table*}[!ht]
    \centering
    \caption{\textbf{ULM test set results for spatio-temporal filtering.} Scores are reported as mean~$\pm$~std across frames ($\text{--}$ for not applicable; NaN for numerical instability; std omitted when zero). \textbf{Bold} is best non-GT score and \colorbox{gray!15}{shaded} second-best. GT is excluded from ranking.\label{tab:metrics:ulm}}
    \resizebox{\textwidth}{!}{%

\begin{tabular}{l S[table-format=2.2(2),detect-all] S[table-format=2.2(2),detect-all] S[table-format=2.2(2),detect-all] S[table-format=2.2(2),detect-all] S[table-format=2.2(2),detect-all] S[table-format=2.2(2),detect-all] S[table-format=2.2(2),detect-all] S[table-format=2.2(2),detect-all]}
\toprule
Method & RNN & $\overline{\mathrm{SSIM}(\mathbf{M})}\uparrow$ & $\overline{\|\mathbf{X}\|_F^{\text{ort}}}\downarrow$ & $\overline{\|\mathbf{D}\|_F^{\text{rel}}}\downarrow$ & $\overline{\|\mathbf{Y}\|_F^{\text{ort}}}\downarrow$ & $\text{Time}~\text{[ms]}\downarrow$ \\
\midrule
GT~\cite{nvidia2025cusolver} & \ding{55} & 1.00(0.00) & 0.00(0.00) & 0.00(0.00) & 0.00(0.00) & 75.16(0.96) \\
rSVD~\cite{halko2011finding} & \ding{55} & 0.72(0.02) & \text{NaN} & \text{NaN} & \text{NaN} &  \cellcolor{gray!15} 1.98(0.02) \\
Analytic & \ding{55} & 0.67(0.03) & \text{NaN} & \text{NaN} & \text{NaN} & \bfseries 1.23(0.00) \\
\midrule
E2E & \ding{51} & 0.71(0.03) & \text{--} & \text{--} & \text{--} & 2.21(0.00) \\
UNN & \ding{51} & \cellcolor{gray!15} 0.77(0.03) & 0.84(0.00) & \cellcolor{gray!15} 0.93(0.00) & 22.24(0.04) & 2.49(0.01) \\
LieNN & \ding{51} & \bfseries 0.89(0.03) & \bfseries 0.00(0.00) & 0.97(0.00) & \cellcolor{gray!15} 20.80(1.25) & 3.28(0.04) \\
\bottomrule
\end{tabular}
}
\end{table*}
\begin{figure*}[!ht]
    \centering
    \begin{minipage}[c]{.915\linewidth}
        \includegraphics[width=\linewidth]{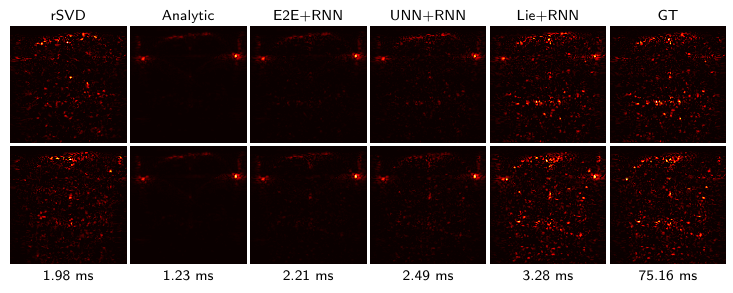}
    \end{minipage}
    \hfill
    \begin{minipage}[c]{.075\linewidth}
        \includegraphics[width=\linewidth]{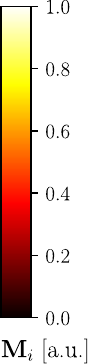}
    \end{minipage}
    \caption{\textbf{ULM clutter removal benchmark analysis.} Each frame shows the B-mode intensity after spatio-temporal filtering. Rows depict results at different points in acquisition time with columns representing methods for comparison. Numbers at the bottom represent the average computation time for a $m=n=500$ input matrix $\mathbf{A}$. Images are colour-encoded for better visibility.\label{fig:ulm}}
\end{figure*}
%
Figure\ref{fig:ulm} provides qualitative inspection of ULM results depicting filtered B-mode reconstructions across all methods. The LieNN produces the closest match to the GT, maintaining microbubble contrast and effective clutter suppression while operating more than an order of magnitude faster. This suggests that online ULM is achievable without compromising fidelity. Randomised SVD provides moderate acceleration but exhibits reduced contrast for deeper penetration areas. The remaining approaches show overall lower contrast, consistent with their weaker quantitative performance. Taken together, the LieNN offers the strongest balance between accuracy and computational efficiency, with visual improvements that align with trends observed across benchmarks.
\paragraph{Scalability} %
To evaluate the computational scaling of best-performing approaches, Fig.~\ref{fig:times} plots measured execution times as a function of matrix dimension and batch size. The matrix-dimension sweep (Fig.~\ref{fig:times:a}) reflects the increasing number of temporal frames utilised in ULM clutter removal, where deep temporal stacks are often preferred~\cite{demene2015spatiotemporal,baranger2018adaptive,heiles2022pala}. Conversely, the batch-size sweep (Fig.~\ref{fig:times:b}) corresponds to the spatial pixel count in a Mueller matrix image, where each pixel requires an independent 3×3 decomposition. \par
\begin{figure*}[!hb]
    \centering
    \begin{minipage}[t]{.48\linewidth}
        \begin{overpic}[width=1\linewidth]{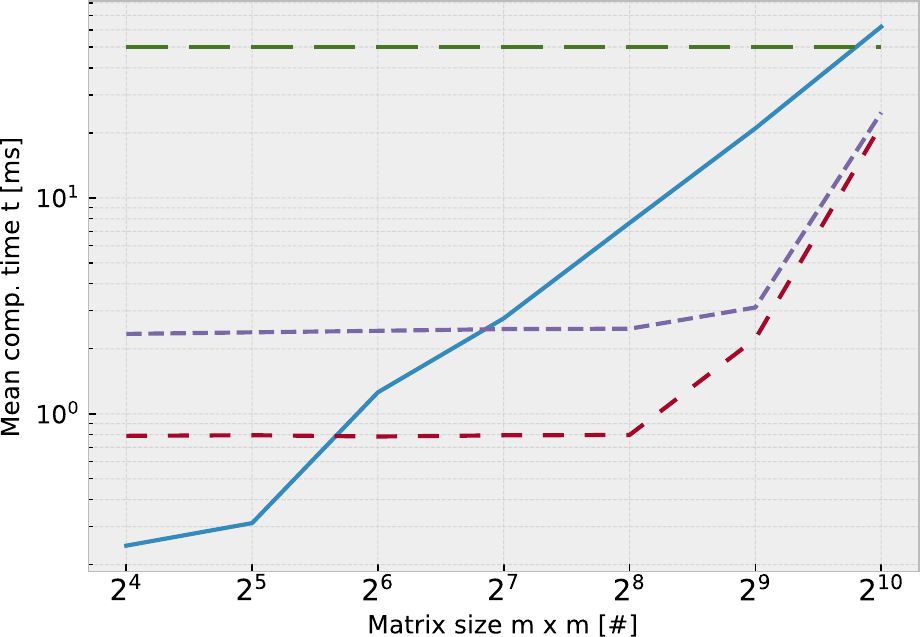}
          \put(15,40){\includegraphics[width=0.4\linewidth]{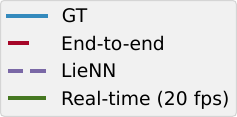}}
        \end{overpic}
        \subcaption{Varying matrix size $m=n$ at constant batch dimension 1\label{fig:times:a}}
    \end{minipage}
    \hfill
    \begin{minipage}[t]{.48\linewidth}
        \begin{overpic}[width=1\linewidth]{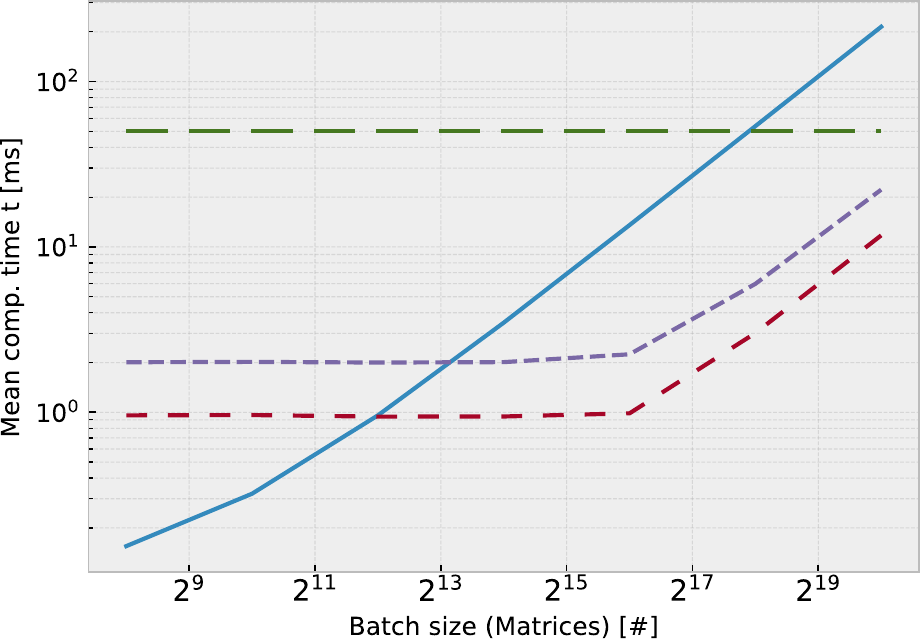}
          \put(15,45){\includegraphics[width=0.4\linewidth]{figs/compute_times_plot_bs_3_log_legend.pdf}}
        \end{overpic}
        \subcaption{Varying batch size at constant matrix dimension $m=n=3$\label{fig:times:b}}
    \end{minipage}
    \caption{\textbf{Computation time plots across matrix and batch sizes.} Timings were measured on an Nvidia H100 GPU to accommodate large memory demands and averaged over 1000 iterations after 100 preceding warm-ups each. Runtime increases sharply with data size, amplifying the gap between the classical SVD and the proposed methods. The plots use logarithmic axes (base-10 on y, base-2 on x), visually compressing exponential growth in runtime. \label{fig:times}}
\end{figure*}
The plots use logarithmic axes (base-10 on the vertical and base-2 on the horizontal axis), which compress the rapid growth of absolute runtimes and let the curves appear deceptively linear. Up to moderate sizes ($\approx 2^6$ in matrix size; $\approx2^{13}$ in batch size), presented methods behave similarly while cuSOLVER (GT) benefits from the Jacobi routine, which is optimal for small matrices ($m,n \le 32$). Beyond these ranges, the absolute gaps widen substantially although the log scaling masks their steep divergence. This underscores the practical advantage of the proposed approach as data scales toward high-throughput regimes.

\section{Discussion}
\label{sec:discussion}

Quasi-SVD trades the zero-calibration convenience of classical SVD for a one-time, per-modality training cost, on the order of 4-6 GPU-hours, amortised once the model is deployed across many concurrent inference instances. This cost remains modest relative to the broader trend of escalating GPU prices and the growing reliance on data-centre-scale accelerators across imaging and learning-based research. The present evaluation covers two representative medical imaging regimes. Extending Quasi-SVD to substantially different modalities or matrix-size ranges would call for analogous retraining, consistent with standard practice for learned imaging components.

The design choice for the \emph{one-sided} hard-orthogonality constraint is corroborated empirically: enforcing the same hard orthogonality constraint on both $\mathbf{X}$ and $\mathbf{Y}$ causes the learned factors to fail to reconstruct $\mathbf{A}$ (Supp.~Sec.~\ref{supp:sec:attempts}), confirming that the asymmetric constraint is necessary for a trainable solution in addition to being theoretically sufficient.

RNN augmentation plays a central role in achieving top reconstruction fidelity, underscoring the value of learned iterative refinement over static one-shot prediction. This advantage is specific to the learned recurrent mechanism: extending the same iterative principle via classical energy-based optimisation instead increases computational cost without improving accuracy (Supp.~Sec.~\ref{supp:sec:attempts}), 
confirming that the benefit stems from the network's learned representation rather than iteration alone.

The end-to-end variant's fastest runtime comes at the cost of explicit factor recovery, trading interpretability for speed. This positions end-to-end learning as well suited to applications requiring only the final reconstructed output (e.g., MRI coil compression~\cite{cole2022learned}), whereas applications requiring access to intermediate spectral components for quality control or downstream spectral analysis benefit from the explicit factorisation preserved by LieNN.

The diagnostic relevance of the azimuth angle as a fibre-orientation biomarker has already been established through pathologist-evaluated studies~\cite{Hahne:25:OpEx}; the present evaluation therefore benchmarks Quasi-SVD against the established Lu--Chipman decomposition itself, ensuring that high fidelity to this reference translates directly into the same downstream clinical interpretation.

Despite substantial differences in optical hardware and spatial resolution between the NPP and MUC instruments, the accuracy reduction under domain shift remains marginal (Table~\ref{tab:metrics:mouse}), with azimuth patterns closely matching the GT qualitatively (Fig.~\ref{fig:azimuth:b}). This indicates that the per-pixel formulation generalises robustly across imaging instruments without requiring instrument-specific retraining.

The robust domain transfer observed here likely benefits from the strictly per-pixel formulation, which avoids learning spatial or temporal dependencies specific to a given imaging setup. Future work could investigate incorporating explicit spatial and temporal context to further improve reconstruction accuracy. Such context-aware models may achieve higher task-specific performance, but could reduce cross-domain generalisation by relying more heavily on contextual correlations than on the intrinsic structure of the input matrix.

\section{Conclusion}
\label{sec:conclusion}

This work introduced Quasi-SVD, a Lie-informed differentiable approximation of SVD designed for massive GPU parallelism. By enforcing orthogonality analytically via skew-symmetric parametrization and the matrix exponential and relaxing singular-value exactness, the method achieves guaranteed structural constraints with real-time performance. Empirical evaluation on polarimetric and ultrasound benchmarks demonstrates that Lie-constrained models exceed non-constrained baselines in reconstruction fidelity while substantially reducing computation cost. This feature shows strong potential for hardware-aware edge deployment in embedded clincial devices. The proposd pixel-wise formulation yields strong domain transfer properties, making Quasi-SVD a robust and performant alternative for conventional SVD in imaging pipelines.

The framework demonstrates that principled mathematical structure combined with targeted learning can reconcile numerical rigor with real-time computational demands. Future extensions may include imposing additional mathematical constraints for improved singular-value and left-factor fidelity, as well as complex-valued implementations. 
Overall, Quasi-SVD offers a practical pathway for deploying structured matrix decompositions in time-constrained imaging systems.

\bibliographystyle{plainnat}
\bibliography{main}

\newpage

\clearpage
\onecolumn  
\begin{center}
    {\Large\bfseries Supplementary Material}
\end{center}
\vspace{1em}

\appendix
\section{Mathematical foundations}

\subsection{Orthogonality via matrix exponential}

\paragraph{Theorem (Skew-Symmetric Exponential is Orthogonal)} Let $\mathbf{S}\in\mathfrak{so}(m)$ be a real skew-symmetric matrix (i.e., $\mathbf{S}^\top = -\mathbf{S}$). Then the matrix exponential
$$
\exp(\mathbf{S}) = \sum_{k=0}^{\infty} \frac{\mathbf{S}^k}{k!}
$$
is an orthogonal matrix, meaning that
$$
\exp(\mathbf{S})^\top \exp(\mathbf{S}) = \mathbf{I}.
$$

\paragraph{Proof} As $\mathbf{S}$ is skew-symmetric, we have \mbox{$\mathbf{S}^\top = -\mathbf{S}$}. Consider the matrix exponential:
$$
\exp(\mathbf{S}) = \mathbf{I} + \mathbf{S} + \frac{\mathbf{S}^2}{2!} + \frac{\mathbf{S}^3}{3!} + \cdots.
$$
Taking the transpose, and using the linearity of the transpose along with the fact that $(\mathbf{S}^k)^\top = (\mathbf{S}^\top)^k$, we obtain
$$
\exp(\mathbf{S})^\top = \mathbf{I} + \mathbf{S}^\top + \frac{(\mathbf{S}^\top)^2}{2!} + \frac{(\mathbf{S}^\top)^3}{3!} + \cdots.
$$
Substitute $\mathbf{S}^\top = -\mathbf{S}$:
$$
\exp(\mathbf{S})^\top = \mathbf{I} - \mathbf{S} + \frac{\mathbf{S}^2}{2!} - \frac{\mathbf{S}^3}{3!} + \cdots = \exp(-\mathbf{S}).
$$

Now, consider the product:
$$
\exp(\mathbf{S})^\top \exp(\mathbf{S}) = \exp(-\mathbf{S}) \exp(\mathbf{S}).
$$
Using the property of the exponential function on commuting matrices,
$$
\exp(-\mathbf{S}) \exp(\mathbf{S}) = \exp(-\mathbf{S} + \mathbf{S}) = \exp(\mathbf{0}) = \mathbf{I}.
$$
Thus, $\exp(\mathbf{S})^\top \exp(\mathbf{S}) = \mathbf{I}$, which shows that $\exp(\mathbf{S})$ is orthogonal. 

\paragraph{Corollary} This guarantees that $\mathbf{X} = \exp(\mathbf{S}) \in \mathrm{SO}(m)$ for any skew-symmetric $\mathbf{S} \in \mathfrak{so}(m)$.

\subsection{Special orthogonal group}
\label{sec:specialgroup}

The matrix exponential of a skew-symmetric matrix produces an element of the special orthogonal group,
\begin{equation*}
	\exp(\mathbf{S}) \in \mathrm{SO}(m) \subset \mathrm{O}(m),
\end{equation*}
which enforces the constraint $\det(\mathbf{X}) = 1$. In contrast, the full orthogonal group also contains matrices with negative determinant, i.e.\ $\det(\mathbf{X}') \in \{1,-1\}$ for $\mathbf{X}' \in \mathrm{O}(m)$.

In some applications, such as matching the sign structure of a target orthogonal factor, it is necessary to obtain an orthogonal matrix with either determinant. To allow this flexibility while still parameterizing $\mathbf{X}$ through a skew-symmetric generator, we introduce a diagonal by
\begin{equation*}
	\mathbf{B} = \operatorname{diag}(\mathbf{b}), \qquad \mathbf{b} \in \{-1,1\}^m,
\end{equation*}
which represents an element of $\mathrm{O}(m)$. The corrected orthogonal matrix is then computed as
\begin{equation*}
	\mathbf{X}' = \mathbf{B} \mathbf{X}.
\end{equation*}
where left-multiplication by $\mathbf{B}$ flips selected column directions of $\mathbf{X}$ and adjusts the determinant when required, while preserving orthogonality:
\begin{equation*}
(\mathbf{X}')^{\top} \mathbf{X}' = \mathbf{X}^{\top} \mathbf{B}^{2} \mathbf{X} = \mathbf{I}.
\end{equation*}
\par
We evaluated the reflection correction during ablations and found that it neither improved approximation quality nor altered the stability of training. Because its effect was neutral and the method functions identically without it, we report the mechanism only in the supplementary to avoid diverting attention from components that materially influence performance. While this correction is theoretically available, we demonstrate in hereafter that restricting to $\mathrm{SO}(m)$ is sufficient for valid decomposition.

\subsection{Why Asymmetric Constraints Suffice}
\label{sec:asymmetric}
\paragraph{Lemma (Existence of a Decomposition with $\mathbf{X}\in\mathrm{SO}(m)$)} Consider a matrix $\mathbf{A} \in \mathbb{R}^{m \times n}$ with $\mathbf{A} = \mathbf{X} \mathbf{D} \mathbf{Y}^\top$, where $\mathbf{X}, \mathbf{Y}$ contain the left and right singular vectors of $\mathbf{A}$ and $\mathbf{D}=\operatorname{diag}(\mathbf{d})$ contains the singular values, such that for the column vectors $\mathbf{x}_k, \mathbf{y}_k$ holds
\begin{equation*}
\mathbf{A}\mathbf{A}^\top \mathbf{x}_k = \sigma_k^2 \mathbf{x}_k, \quad
\mathbf{A}^\top\mathbf{A} \mathbf{y}_k = \sigma_k^2 \mathbf{y}_k.
\end{equation*}
Now, if $\mathbf{X} \in \mathrm{O}(m)$ such that $\det(\mathbf{X}) = -1$, define sign vectors
$\mathbf{b}_X = [-1, 1, \dots, 1]^\top \in \mathbb{R}^m$ and
$\mathbf{b}_Y = [-1, 1, \dots, 1]^\top \in \mathbb{R}^n$, and consider the matrices:
\begin{align*}
\tilde{\mathbf{X}} = \operatorname{diag}(\mathbf{b}_X) \mathbf{X} \, , \quad \text{and} \quad
\tilde{\mathbf{Y}} = \operatorname{diag}(\mathbf{b}_Y) \mathbf{Y}.
\end{align*}
These $\tilde{\mathbf{X}}$, $\tilde{\mathbf{Y}}$ remain orthogonal because they were obtained via multiplication of orthogonal matrices. Since $\operatorname{diag}(\mathbf{b}_X)^2 = \mathbf{I}_m$ and $\operatorname{diag}(\mathbf{b}_Y)^2 = \mathbf{I}_n$, it holds that
\begin{align*}
	\mathbf{A} &= \mathbf{X}\,\mathbf{D}\,\mathbf{Y}^\top \\[4pt]
	&= \big(\operatorname{diag}(\mathbf{b}_X)\,\mathbf{X}\big)\,\mathbf{D}\,
	\big(\operatorname{diag}(\mathbf{b}_Y)\,\mathbf{Y}\big)^\top \\[4pt]
	&= \tilde{\mathbf{X}}\,\mathbf{D}\,\tilde{\mathbf{Y}}^\top .
\end{align*}
$\tilde{\mathbf{X}}$, $\tilde{\mathbf{Y}}$ still contain the singular vectors corresponding to the same singular values: negating a singular vector $\mathbf{x}_k$ yields $-\mathbf{x}_k$, which satisfies $\mathbf{A}\mathbf{A}^\top(-\mathbf{x}_k) = \sigma_k^2(-\mathbf{x}_k)$, so the singular value is unchanged.

Moreover, $\tilde{\mathbf{X}} \in \mathrm{SO}(m)$:
\begin{align*}
\det(\tilde{\mathbf{X}})
&= \det(\operatorname{diag}(\mathbf{b}_X)) \det(\mathbf{X}) \\[4pt]
&= (-1)(-1) = 1.
\end{align*}
Note that this sign flexibility (discussed in \ref{sec:specialgroup}) is a theoretical convenience for the proof; the practical method does not require explicit sign matching.

This lemma directly justifies the Quasi-SVD design: enforcing exact orthogonality on $\mathbf{X}$ via the exponential map is always sufficient to represent a valid left factor, while $\mathbf{Y}$ is recovered residually and only encouraged toward orthogonality through the soft constraint $\mathcal{L}_{\text{ort}}$. This asymmetric treatment is discussed further in Section~\ref{sec:method} of the main paper.

\subsection{Approximation Bounds}

\subsubsection{Gershgorin Bounds on Singular Values}

\paragraph{Gershgorin Circle Theorem} Let $\mathbf{A} \in \mathbb{R}^{n \times n}$ with entries denoted as $a_{ij}$. Then, the eigenvalues are contained in the circles whose radii are bounded by the sums of the matrix entries:
\[
\lambda \in \left\{z \mid |z - a_{ii}| < R_i\right\} \quad \text{where} \quad R_i = \sum_{\substack{j=1 \\ j \neq i}}^n |a_{ij}|.
\]

\paragraph{Bounds on Singular Values via Coherence} Consider the symmetric squared matrix $\mathbf{B} := \mathbf{A}^\top \mathbf{A} \in \mathbb{R}^{n \times n}$ such that $b_{ik} = \sum_{j=1}^n a_{ji} a_{jk} = \langle \mathbf{a}_{.i}, \mathbf{a}_{.k} \rangle$, where $\mathbf{a}_{.i}, \mathbf{a}_{.k}$ are the $i$-th and $k$-th columns of $\mathbf{A}$. Applying the Gershgorin Circle Theorem to $\mathbf{B}$, the eigenvalues of $\mathbf{B}$ (i.e., the squared singular values of $\mathbf{A}$) satisfy
\[
\lambda \in \left\{z \mid |z - \|\mathbf{a}_{.i}\|^2| < S_i\right\}, \qquad
S_i := \sqrt{n-1} \cdot \mu(\mathbf{A}) \|\mathbf{A}\|_F \|\mathbf{a}_{.i}\|,
\]
where the coherence of $\mathbf{A}$ is given by
\[
\mu(\mathbf{A}) = \max_{i \neq k} \frac{\langle \mathbf{a}_{.i}, \mathbf{a}_{.k} \rangle}{\|\mathbf{a}_{.i}\| \|\mathbf{a}_{.k}\|}.
\]

\paragraph{Proof} By the Gershgorin Circle Theorem applied to $\mathbf{B}$, the Gershgorin radius for row $i$ is
\[
R_i^{\mathbf{B}} := \sum_{\substack{k=1 \\ k \neq i}}^n |b_{ik}|.
\]
The diagonal elements satisfy
\[
|b_{ii}| = \left|\sum_{j=1}^n a_{ji}^2\right| = \|\mathbf{a}_{.i}\|^2,
\]
confirming the center of the $i$-th Gershgorin disc. For the off-diagonal radius, using the coherence bound $|\langle \mathbf{a}_{.i}, \mathbf{a}_{.k}\rangle| \le \mu(\mathbf{A})\|\mathbf{a}_{.i}\|\|\mathbf{a}_{.k}\|$:
\begin{align*}
R_i^{\mathbf{B}} &= \sum_{\substack{k=1 \\ k \neq i}}^n |b_{ik}|
= \sum_{\substack{k=1 \\ k \neq i}}^n |\langle \mathbf{a}_{.i}, \mathbf{a}_{.k} \rangle| \\
&\leq \mu(\mathbf{A}) \|\mathbf{a}_{.i}\| \sum_{\substack{k=1 \\ k \neq i}}^n \|\mathbf{a}_{.k}\|
\leq \mu(\mathbf{A}) \|\mathbf{a}_{.i}\| \sqrt{n-1} \|\mathbf{A}\|_F = S_i,
\end{align*}
where the final step applies Cauchy--Schwarz,
\[
\sum_{k=1}^n \|\mathbf{a}_{.k}\| \leq \sqrt{n} \|\mathbf{A}\|_F,
\]
restricted to the $n-1$ terms $k\neq i$. Thus $R_i^{\mathbf{B}} \le S_i$, establishing $S_i$ as a valid Gershgorin radius bound.

\paragraph{Implication for Quasi-SVD} The quality of our singular value approximation $\mathbf{d} = \operatorname{diag}^{-1}(\sqrt{|\mathbf{X}^\top \mathbf{A} \mathbf{A}^\top \mathbf{X}|})$ (Eq.~\eqref{eq:diag_closed} of the main paper) depends on:
\begin{itemize}
	\item Matrix coherence $\mu(\mathbf{A})$ (low coherence $\rightarrow$ tighter bounds);
	\item Frobenius norm $\|\mathbf{A}\|_F$ (normalized inputs perform better);
	\item How well $\mathbf{X}$ aligns with the true left singular vectors.
\end{itemize}
The diagonal extraction in Eq.~\eqref{eq:diag_closed} recovers exact singular values when $\mathbf{X}$ equals the true left singular matrix; for the approximate $\mathbf{X}$ produced by the network, it yields the diagonal of $\mathbf{X}^\top\mathbf{A}\mathbf{A}^\top\mathbf{X}$, which approximates the squared singular values under the assumption that $\mathbf{X}$ is close to the true left factor.


\section{Complexity Proofs for Lie-based SVD}
\label{sec:supp:complexity}
This section provides formal derivations for all work and span claims
stated in Section~\ref{sec:complexity} of the main paper. Throughout, we write
$r = m = \min(m,n)$, corresponding to the implicit rank used in the main
paper's complexity table. We adopt the EREW PRAM model~\cite{blelloch1996programming}: processors may not simultaneously read or write the same memory cell. All reductions
over $p$ elements use balanced binary fan-in trees.

\subsection*{Notation and Model Assumptions}

\begin{itemize}
  \item $A \in \mathbb{R}^{m \times n}$, $m \le n$ (enforced by transpose
    branch), $r = \min(m,n) = m$.
  \item MLP hidden width $h = \mathcal{O}(r)$.
  \item \emph{Work} $W$: total arithmetic operations.
  \item \emph{Span} $S$: depth of the critical path in the DAG of
    operations.
  \item Matrix multiply $\mathbb{R}^{a \times b} \cdot \mathbb{R}^{b
    \times c}$: work $\mathcal{O}(abc)$, span $\mathcal{O}(\log b)$
    (parallel dot products, each a reduction tree).
  \item By \textbf{Brent's theorem}~\cite{brent1974parallel}, wall-clock
    time on $P$ processors satisfies $T_P = \mathcal{O}(W/P + S)$.
\end{itemize}

\subsection*{Stage 1: Pre-processing MLP}
\label{sec:supp:stage1}

\begin{proposition}[Pre-processing work and span]
\label{prop:preproc}
A two-layer MLP $f:\mathbb{R}^{mn} \to \mathbb{R}^{mn}$ with hidden
width $h$ satisfies
\[
  W_{\mathrm{pre}} = \mathcal{O}(mnh), \qquad
  S_{\mathrm{pre}} = \mathcal{O}(\log(mn)).
\]
\end{proposition}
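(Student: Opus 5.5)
The plan is to write the two-layer MLP as $f(\mathbf{a}) = \mathbf{W}_2\,\phi(\mathbf{W}_1\mathbf{a}+\mathbf{b}_1)+\mathbf{b}_2$ with $\mathbf{a}=\mathrm{vec}(\mathbf{A})\in\mathbb{R}^{mn}$, $\mathbf{W}_1\in\mathbb{R}^{h\times mn}$, $\mathbf{W}_2\in\mathbb{R}^{mn\times h}$, and $\phi$ an elementwise nonlinearity. I would split the computation into four stages: first matrix--vector product, bias and activation, second matrix--vector product, second bias. Each stage is then charged with its work and span using the matrix-multiply convention from the model assumptions (work $\mathcal{O}(abc)$, span $\mathcal{O}(\log b)$). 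Because the stages are composed sequentially, works add and spans add.

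For the first layer, $\mathbf{W}_1\mathbf{a}$ is the product $\mathbb{R}^{h\times mn}\cdot\mathbb{R}^{mn\times 1}$. It costs work $\mathcal{O}(h\cdot mn)$ and span $\mathcal{O}(\log(mn))$, since each of the $h$ outputs is an independent dot product of length $mn$ evaluated by a balanced fan-in tree. Adding $\mathbf{b}_1$ and applying $\phi$ are $h$ independent scalar operations, so they cost work $\mathcal{O}(h)$ and span $\mathcal{O}(1)$. This assumes $\phi$ (ReLU, tanh, or similar) is a constant-cost primitive; the GRU option is not part of this proposition. For the second layer, $\mathbb{R}^{mn\times h}\cdot\mathbb{R}^{h\times 1}$ costs work $\mathcal{O}(mnh)$ and span $\mathcal{O}(\log h)$, and the second bias adds work $\mathcal{O}(mn)$ and span $\mathcal{O}(1)$.

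Summing the stages gives $W_{\mathrm{pre}}=\mathcal{O}(2mnh+h+mn)=\mathcal{O}(mnh)$ and $S_{\mathrm{pre}}=\mathcal{O}(\log(mn)+\log h)$. To absorb $\log h$ into $\log(mn)$, I use the assumption $h=\mathcal{O}(r)=\mathcal{O}(m)\le\mathcal{O}(mn)$, so that $\log h=\mathcal{O}(\log(mn))$. The architecture's choice $h=\max(64,m/2)$ introduces only a constant, which is also harmless.

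The only step needing care is EREW compliance. Reading the shared vector $\mathbf{a}$ concurrently in all $h$ dot products is an exclusive-read violation, and the same applies to the hidden vector in the second layer. I would handle this by first broadcasting $\mathbf{a}$ into $h$ private copies with a binary doubling tree. This costs work $\mathcal{O}(mnh)$ and span $\mathcal{O}(\log h)$, so neither bound changes. The broadcast of the hidden vector to the $mn$ output rows costs $\mathcal{O}(mnh)$ work and $\mathcal{O}(\log(mn))$ span, which again fits within the stated bounds.
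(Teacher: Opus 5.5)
Your proposal is correct and follows the paper's proof essentially step for step: layer~1 as $h$ parallel length-$mn$ reduction trees, a constant-span activation, layer~2 as $mn$ length-$h$ reductions, then summing and absorbing $\log h$ into $\log(mn)$. The paper's proof simply notes $\log h\le\log(mn)$ and assumes unlimited processors. Your explicit EREW broadcast step and your appeal to $h=\mathcal{O}(m)$ are refinements it omits but that leave the bounds unchanged; that inequality is literally false for $h=\max(64,m/2)$ at $m=n=3$, so your version is the more careful one.
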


\begin{proof}
The MLP computes
$f(x) = W_2\,\sigma(W_1 x + b_1) + b_2$
with $W_1 \in \mathbb{R}^{h \times mn}$ and $W_2 \in \mathbb{R}^{mn
\times h}$.

\textbf{Layer 1.}  Each of the $h$ output neurons computes an inner
product over $mn$ inputs.  With unlimited processors, all $h$ inner
products run in parallel; each inner product is a reduction tree of
depth $\lceil \log_2(mn) \rceil$.  Hence
$W_1 = \mathcal{O}(h \cdot mn)$, $S_1 = \mathcal{O}(\log(mn))$.

\textbf{Activation.}  Elementwise $\sigma$ on $h$ values:
$W = \mathcal{O}(h)$, $S = \mathcal{O}(1)$.

\textbf{Layer 2.}  Each of the $mn$ outputs is an inner product over $h$
values: $W_2 = \mathcal{O}(mn \cdot h)$, $S_2 = \mathcal{O}(\log h)$.

Summing, and noting $\log h \le \log(mn)$:
$W_{\mathrm{pre}} = \mathcal{O}(mnh)$,
$S_{\mathrm{pre}} = \mathcal{O}(\log(mn))$.
\end{proof}

\subsection*{Stage 2: Reduction MLP}

\begin{proposition}[Reduction work and span]
\label{prop:reduce}
The reduction MLP $g:\mathbb{R}^{mn} \to \mathbb{R}^{\binom{m}{2}}$
with hidden width $h$ satisfies
\[
  W_{\mathrm{red}} = \mathcal{O}(mnh), \qquad
  S_{\mathrm{red}} = \mathcal{O}(\log(mn)).
\]
\end{proposition}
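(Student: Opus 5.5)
The plan is to mirror the proof of Proposition~\ref{prop:preproc} almost verbatim, since the reduction MLP differs only in its output dimension. I write $g(x) = W_2\,\sigma(W_1 x + b_1) + b_2$ with $W_1 \in \mathbb{R}^{h\times mn}$ and $W_2 \in \mathbb{R}^{\binom{m}{2}\times h}$. I then account for work and span layer by layer under the EREW PRAM conventions fixed in the Notation paragraph. In particular, matrix--vector products have span logarithmic in the inner dimension, computed via balanced binary reduction trees.

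\textbf{Layer 1.} This is identical to Stage~1. There are $h$ inner products over $mn$ inputs, all run in parallel, so $W_1=\mathcal{O}(mnh)$. Each inner product is a fan-in tree of depth $\lceil\log_2(mn)\rceil$, giving $S_1=\mathcal{O}(\log(mn))$.

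\textbf{Activation and bias additions.} These are elementwise on $h$ and $\binom{m}{2}$ entries, so they cost $\mathcal{O}(h+m^2)$ work and $\mathcal{O}(1)$ span.

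\textbf{Layer 2.} There are $\binom{m}{2}=\mathcal{O}(m^2)$ outputs, each an inner product of length $h$. This gives work $\mathcal{O}(m^2 h)$ and span $\mathcal{O}(\log h)$.

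To sum the layers, I use $m\le n$, which is enforced by the transpose branch. This gives $m^2h\le mnh$, so the second layer's work is dominated and $W_{\mathrm{red}}=\mathcal{O}(mnh)$. For the span, I need $\log h\le \log(mn)$ up to constants. With $h=\max(64,m/2)$, or the model assumption $h=\mathcal{O}(r)=\mathcal{O}(m)$, we have $\log h=\mathcal{O}(\log m + 1)=\mathcal{O}(\log(mn))$ whenever $mn\ge 2$. The constant $64$ is absorbed into the $\mathcal{O}$. Hence $S_{\mathrm{red}}=\mathcal{O}(\log(mn))$.

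There is no genuine obstacle here. The only points needing care are two. First, the output count $\binom{m}{2}$ must be bounded by $mn$; this is exactly where $m\le n$ is used. Second, memory reads of the shared input $x$ by $h$ processors must be legal under EREW. For the latter, I would note that $x$ can be broadcast to $h$ copies by a doubling tree in $\mathcal{O}(\log h)$ span and $\mathcal{O}(mnh)$ work. This leaves both bounds unchanged, and the same remark implicitly justifies Stage~1.
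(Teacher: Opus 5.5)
Your proof is correct and takes the paper's approach: repeat the Stage~1 layer-by-layer accounting, then use $m\le n$ to bound the output-layer cost $\mathcal{O}(\binom{m}{2}h)=\mathcal{O}(m^2h)$ by $\mathcal{O}(mnh)$. Your extra justification that $\log h=\mathcal{O}(\log(mn))$ follows from $h=\mathcal{O}(m)$ is a genuine refinement, since the paper's literal claim $\log h\le\log(mn)$ fails for $h=64$, $m=n=3$; the EREW broadcast remark is also a refinement, but neither changes the route.
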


\begin{proof}
Identical argument to Proposition~\ref{prop:preproc}.  The output
dimension is $|\mathcal{T}| = m(m-1)/2 \le m^2/2$.  Since $m \le n$,
$m^2 \le mn$, so the output layer cost $\mathcal{O}(|\mathcal{T}| \cdot
h) = \mathcal{O}(m^2 h) = \mathcal{O}(mnh)$ does not exceed the input
layer cost.
\end{proof}

\subsection*{Stage 3: Skew-Symmetric Scatter}

\begin{proposition}[Fill skew-symmetric work and span]
\label{prop:skew}
Scattering $|\mathcal{T}|$ values into an $m \times m$ skew-symmetric
matrix satisfies
\[
  W_{\mathrm{skew}} = \mathcal{O}(m^2), \qquad
  S_{\mathrm{skew}} = \mathcal{O}(1).
\]
\end{proposition}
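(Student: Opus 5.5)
The plan is to count work and span directly in the EREW PRAM model, with the same conventions used in Propositions~\ref{prop:preproc} and~\ref{prop:reduce}. The input is the vector $\mathbf{s}\in\mathbb{R}^{|\mathcal{T}|}$ with $|\mathcal{T}|=m(m-1)/2$, as produced by the reduction MLP. Its entries are indexed by the upper-triangular index set $\mathcal{T}=\{(i,j):1\le i<j\le m\}$, in the row-major ordering of the map $F$. The output is the matrix $F(\mathbf{s})\in\mathfrak{so}(m)$.

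The first step is to observe that the map $F$ is determined by a fixed index bijection $\tau:\{1,\dots,|\mathcal{T}|\}\to\mathcal{T}$. This bijection depends only on $m$, not on the data. It can therefore be precomputed once, or evaluated in closed form in $\mathcal{O}(1)$ arithmetic per index. I would then assign one processor to each cell $(i,j)$ of the $m\times m$ output and write:
\begin{itemize}
\item $s_{\tau^{-1}(i,j)}$ if $i<j$;
\item $-s_{\tau^{-1}(j,i)}$ if $i>j$;
\item $0$ if $i=j$.
\end{itemize}
Each processor performs a constant number of operations: an index lookup, at most one negation, and one write. This gives $W_{\mathrm{skew}}=\mathcal{O}(m^2)$ and $S_{\mathrm{skew}}=\mathcal{O}(1)$.

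The only delicate point, and the main obstacle, is the exclusive-read condition of EREW. Each $s_k$ is read by two cells, $(i,j)$ and $(j,i)$, so the naive scheme involves concurrent reads. I would resolve this by assigning one processor per element $k\in\{1,\dots,|\mathcal{T}|\}$ instead. Processor $k$ reads $s_k$ once into a local register, computes $(i,j)=\tau(k)$, and writes $s_k$ to cell $(i,j)$ and $-s_k$ to cell $(j,i)$. These two writes can be done sequentially within a constant number of steps.

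The writes are exclusive because $\tau$ is a bijection onto the strict upper triangle. Hence the targets $\{(i,j),(j,i)\}$ are pairwise disjoint across $k$, and they are disjoint from the diagonal. Separately, $m$ processors write zeros on the diagonal in one step.

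The total work is then $\mathcal{O}(|\mathcal{T}|+m)=\mathcal{O}(m^2)$. The span is the constant number of these steps, which gives $\mathcal{O}(1)$.

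It remains to confirm that the result is exactly $F(\mathbf{s})$ and is skew-symmetric, which is immediate from the construction. By Brent's theorem, this stage therefore contributes no asymptotic span to the pipeline.
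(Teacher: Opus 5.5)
Your proof is correct and follows the paper's argument: one independent task per upper-triangular entry writes $s_k$ and $-s_k$ to the disjoint cells $(i,j)$ and $(j,i)$, which gives $\mathcal{O}(m^2)$ work and constant span because no write depends on another. Your version is slightly more careful than the paper's, which checks only exclusive writes and describes a single parallel step, whereas you also make sure each $s_k$ is read exactly once (so the exclusive-read condition of EREW holds) and explicitly write zeros on the diagonal.
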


\begin{proof}
Each of the $|\mathcal{T}| = m(m-1)/2$ entries is written to two
locations $(i,j)$ and $(j,i)$ with sign flip; there are no dependencies
between writes (EREW is satisfied since all target cells are distinct).
Total writes: $m^2 - m = \mathcal{O}(m^2)$.  All writes execute in one
parallel step, so $S_{\mathrm{skew}} = \mathcal{O}(1)$.
\end{proof}

\subsection*{Stage 4: Matrix Exponential}

\subsubsection*{4a. Cayley Transform}

\begin{proposition}[Cayley transform work and span]
\label{prop:cayley}
The Cayley map $(I - S)^{-1}(I + S)$ for $S \in \mathbb{R}^{m \times m}$
skew-symmetric satisfies
\[
  W_{\mathrm{Cay}} = \mathcal{O}(m^3), \qquad
  S_{\mathrm{Cay}} = \mathcal{O}(m \log m).
\]
\end{proposition}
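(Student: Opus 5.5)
The plan is to compute the Cayley map $Q=(I-S)^{-1}(I+S)$ in three stages: form $I-S$ and $I+S$, solve the linear system $(I-S)Q=(I+S)$ with $m$ right-hand sides, and account work and span in the EREW PRAM model fixed at the start of this section.

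The first step is to check that the solve is well posed. Since $S$ is skew-symmetric, its eigenvalues are purely imaginary, so $I-S$ has no zero eigenvalue and is invertible. Moreover, $I-S$ has positive definite symmetric part $\tfrac12\bigl((I-S)+(I-S)^\top\bigr)=I$. For such matrices Gaussian elimination without pivoting never encounters a zero pivot, because every Schur complement inherits the positive definite symmetric part. This lets the span analysis avoid pivot searches, which would otherwise add a $\log m$ reduction per step. That is harmless asymptotically, but ruling it out keeps the argument clean.

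The second step bounds the cost of forming $I\pm S$. This is an elementwise operation with work $\mathcal{O}(m^2)$ and span $\mathcal{O}(1)$; each entry is read once per output, which respects EREW.

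The third step is the LU factorisation. I would run right-looking elimination for $k=1,\dots,m-1$. At step $k$ the algorithm computes the multipliers $\ell_{ik}=a_{ik}/a_{kk}$ in parallel, with span $\mathcal{O}(1)$, and then performs the rank-one update of the trailing $(m-k)\times(m-k)$ block, with work $\mathcal{O}((m-k)^2)$ and span $\mathcal{O}(1)$ arithmetically.

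The EREW restriction is the subtle point, and I expect it to be the main obstacle. In the rank-one update the pivot row and the multiplier column are each read by $\Theta(m)$ processors simultaneously, which is a concurrent read. I would resolve this by broadcasting each value via a binary copying tree of depth $\mathcal{O}(\log m)$, so every elimination step costs span $\mathcal{O}(\log m)$ while its work stays $\mathcal{O}((m-k)^2)$. The copies add only $\mathcal{O}(m^2)$ work per step, which does not exceed $\mathcal{O}(m^3)$ in total. Summing over the $m-1$ steps gives work $\sum_k (m-k)^2=\mathcal{O}(m^3)$ and span $\mathcal{O}(m\log m)$.

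The final step handles the triangular solves. With the $m$ columns of $I+S$ as right-hand sides, forward substitution $LZ=I+S$ and backward substitution $UQ=Z$ each proceed row by row, for $m$ sequential rows. Each row requires inner products of length at most $m$, evaluated by reduction trees of depth $\mathcal{O}(\log m)$, for all $m$ columns in parallel. This gives span $\mathcal{O}(m\log m)$ and work $\mathcal{O}(m\cdot m\cdot m)=\mathcal{O}(m^3)$.

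Adding the stages gives $W_{\mathrm{Cay}}=\mathcal{O}(m^3)$ and $S_{\mathrm{Cay}}=\mathcal{O}(m\log m)$, which matches the Cayley row of Table~\ref{tab:complexity}. Brent's theorem then yields $T_P=\mathcal{O}(m^3/P+m\log m)$.
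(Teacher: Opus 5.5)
Your proof is correct and follows the same route as the paper: an LU factorisation of $I-S$ in $m-1$ sequential elimination steps of span $\mathcal{O}(\log m)$ each, then forward and back substitution in $m$ sequential steps of span $\mathcal{O}(\log m)$ each, for total work $\mathcal{O}(m^3)$ and span $\mathcal{O}(m\log m)$. Your changes are refinements rather than a different argument---you drop pivoting because $I-S$ has symmetric part $I$; you solve $(I-S)Q=I+S$ directly with $m$ right-hand sides instead of forming $(I-S)^{-1}$ and multiplying, which makes the $\mathcal{O}(m^3)$ substitution work explicit where the paper's stated $\mathcal{O}(m^2)$ covers only a single right-hand side; and you attribute the per-step $\log m$ to EREW broadcasting rather than to the pivot search; note that the same broadcast is also needed for $a_{kk}$ in the multiplier step and for the entries of $L$ and $U$ in the substitutions, which leaves the bounds unchanged.
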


\begin{proof}
\textbf{LU decomposition of $(I - S)$.}
Gaussian elimination with partial pivoting proceeds through $m-1$
sequential elimination steps.  At step $k$, pivot search over $m-k$
rows costs span $\mathcal{O}(\log m)$, and row updates for the remaining
$m-k$ rows cost span $\mathcal{O}(\log m)$ (parallel scalar-multiply-add
across $m-k$ columns).  Across $m-1$ steps the span accumulates to
$\mathcal{O}(m \log m)$.  Total work for LU: $\mathcal{O}(m^3)$.

\textbf{Triangular solves.} Forward and back substitution each require
$m$ sequential steps with $\mathcal{O}(\log m)$ span per step:
$S = \mathcal{O}(m \log m)$, $W = \mathcal{O}(m^2)$.

\textbf{Matrix multiply $(I-S)^{-1}(I+S)$.}
$\mathbb{R}^{m\times m}\cdot\mathbb{R}^{m\times m}$:
$W = \mathcal{O}(m^3)$, $S = \mathcal{O}(\log m)$.

Combining (LU dominates): $W_{\mathrm{Cay}} = \mathcal{O}(m^3)$,
$S_{\mathrm{Cay}} = \mathcal{O}(m \log m)$.
\end{proof}

\subsubsection*{4b. Rodrigues Formula ($m = n = 3$)}

\begin{proposition}[Rodrigues work and span]
\label{prop:rodrigues}
For $m = n = 3$, the Rodrigues rotation formula satisfies
\[
  W_{\mathrm{Rod}} = \mathcal{O}(1), \qquad S_{\mathrm{Rod}} = \mathcal{O}(1).
\]
\end{proposition}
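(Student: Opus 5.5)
The plan is to count operations directly in the Rodrigues formula
\[
\exp_r(\mathbf{S}) = \mathbf{I} + \frac{\sin\theta}{\theta}\mathbf{S} + \frac{1-\cos\theta}{\theta^2}\mathbf{S}^2, \qquad \theta=\lVert\mathbf{S}\rVert_F/\sqrt{2},
\]
with $\mathbf{S}\in\mathfrak{so}(3)$. We use the same work--span conventions as in Propositions~\ref{prop:preproc}--\ref{prop:cayley}. Since every object has a fixed size ($3\times3$ matrices, three independent skew entries), each stage will involve a constant number of arithmetic operations arranged in a DAG of constant depth. Both bounds then follow by summing over a fixed, finite sequence of stages.

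The stages are as follows.
\begin{enumerate}
\item \emph{Norm.} Compute $\lVert\mathbf{S}\rVert_F^2$ as a sum of 9 squares, or equivalently $2(s_1^2+s_2^2+s_3^2)$ from the three upper-triangular entries. This is a reduction tree over a constant number of terms, so work $\mathcal{O}(1)$ and depth $\lceil\log_2 9\rceil=\mathcal{O}(1)$. Then take one square root and one scaling by $1/\sqrt2$ to obtain $\theta$.
\item \emph{Scalar coefficients.} Evaluate $\sin\theta$, $\cos\theta$ and the two quotients. We treat transcendental functions as unit-cost primitives, consistent with the treatment of the activation $\sigma$ in Proposition~\ref{prop:preproc}.
\item \emph{Square of $\mathbf{S}$.} Form $\mathbf{S}^2$, a $3\times3$ by $3\times3$ product. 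By the stated matrix-multiply cost this has work $27=\mathcal{O}(1)$ and span $\mathcal{O}(\log 3)=\mathcal{O}(1)$. This product is independent of the scalar stage, so the two can run in parallel.
\item \emph{Combination.} Scale $\mathbf{S}$ and $\mathbf{S}^2$ entrywise by their coefficients and add $\mathbf{I}$. This is 9 independent entries, each with a constant number of operations, so depth $\mathcal{O}(1)$.
\end{enumerate}
Summing a constant number of $\mathcal{O}(1)$ terms gives $W_{\mathrm{Rod}}=\mathcal{O}(1)$. Composing a constant number of $\mathcal{O}(1)$-depth stages gives $S_{\mathrm{Rod}}=\mathcal{O}(1)$.

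The main subtlety is the removable singularity at $\theta=0$. There, $\sin\theta/\theta$ and $(1-\cos\theta)/\theta^2$ are evaluated as $0/0$. We handle this by replacing each coefficient, for $\theta$ below a fixed threshold, with a constant-degree Taylor surrogate ($1-\theta^2/6$ and $1/2-\theta^2/24$). This adds only an elementwise branch or select of $\mathcal{O}(1)$ cost and depth. We also state explicitly the modelling assumption that $\sin$, $\cos$ and $\sqrt{\cdot}$ cost $\mathcal{O}(1)$ at fixed precision.

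Finally, we note why these constant costs fit the Rodrigues row of Table~\ref{tab:complexity}. The table's $\mathcal{O}(mn^2)$ work and $\mathcal{O}(\log m)$ span are set by the other stages, namely the MLPs and the one-sided products in Eqs.~\eqref{eq:diag_closed}--\eqref{eq:v_closed}. They are not set by the exponential.
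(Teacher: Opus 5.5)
Your proposal is correct and follows the same argument as the paper. Every object in the Rodrigues formula has fixed size ($3\times3$ matrices, three free skew entries), so a constant number of operations arranged in constant depth gives $W_{\mathrm{Rod}}=S_{\mathrm{Rod}}=\mathcal{O}(1)$. Your version is simply more explicit than the paper's: it includes the $3\times3$ product $\mathbf{S}^2$, which the paper's proof passes over by mentioning only matrix--scalar products; it states the unit-cost assumption for $\sin$, $\cos$ and $\sqrt{\cdot}$; and it handles the removable singularity at $\theta=0$.
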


\begin{proof}
With $m = 3$, a skew-symmetric $S$ has exactly 3 free parameters
$(\theta_1, \theta_2, \theta_3)$.  The Rodrigues formula
\[
  X = I + \frac{\sin\theta}{\theta}\,S +
      \frac{1 - \cos\theta}{\theta^2}\,S^2,
  \quad \theta = \|\mathbf{v}\|_2,\;
  \mathbf{v} = (\theta_1, \theta_2, \theta_3)^\top,
\]
involves a fixed number of scalar operations ($\|\cdot\|$, $\sin$,
$\cos$, divisions) and two $3 \times 3$ matrix-scalar products, all
independent of any variable-size dimension.  Hence both work and span
are $\mathcal{O}(1)$.
\end{proof}

\subsubsection*{4c. Taylor Series (degree $p$)}

\begin{proposition}[Taylor exponential work and span]
\label{prop:taylor}
A degree-$p$ Horner-form Taylor approximation $e^S \approx \sum_{k=0}^p
S^k/k!$ for $S \in \mathbb{R}^{m \times m}$ satisfies
\[
  W_{\mathrm{Tay}} = \mathcal{O}(pm^3), \qquad
  S_{\mathrm{Tay}} = \mathcal{O}(p \log m).
\]
For constant $p = \mathcal{O}(1)$ this reduces to
$W = \mathcal{O}(m^3)$, $S = \mathcal{O}(\log m)$.
\end{proposition}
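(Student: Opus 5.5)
We prove this the same way as the earlier stages: first fix the dependency DAG of the Horner evaluation, then add up work and span in the EREW PRAM model, using the matrix-multiply costs from the model assumptions. For degree $p$, Horner's scheme writes the approximation as
\[
e^{S} \approx \mathbf{I} + S\Bigl(\mathbf{I} + \tfrac{S}{2}\Bigl(\mathbf{I} + \tfrac{S}{3}\bigl(\cdots(\mathbf{I} + \tfrac{S}{p})\cdots\bigr)\Bigr)\Bigr).
\]
Equivalently, it is the recurrence $P_p = \mathbf{I}$, $P_{k} = \mathbf{I} + \tfrac{1}{k} S P_{k+1}$ for $k=p-1,\dots,0$, with output $P_0$. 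Each step of this recurrence consists of three operations:
\begin{itemize}
\item an $m\times m$ by $m\times m$ matrix product $S P_{k+1}$, with work $\mathcal{O}(m^3)$ and span $\mathcal{O}(\log m)$ (each entry is a balanced reduction tree);
\item an elementwise scaling by $1/k$, with work $\mathcal{O}(m^2)$ and span $\mathcal{O}(1)$;
\item an addition of $\mathbf{I}$ on the diagonal, with work $\mathcal{O}(m)$ and span $\mathcal{O}(1)$.
\end{itemize}
The scalars $1/k$ are constants that can be precomputed, so they add nothing to the critical path.

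Summing over the $p$ Horner steps, the total work is $W_{\mathrm{Tay}} = p\cdot\mathcal{O}(m^3) = \mathcal{O}(pm^3)$. Each step needs $P_{k+1}$ before it can start, so the steps form a chain and their spans add: $S_{\mathrm{Tay}} = p\cdot(\mathcal{O}(\log m)+\mathcal{O}(1)) = \mathcal{O}(p\log m)$. This total is an upper bound on the depth of the computation DAG, which is all the stated span bound requires. Setting $p=\mathcal{O}(1)$ (for example the default $p=9$ from Section~\ref{sec:complexity}) gives $W=\mathcal{O}(m^3)$ and $S=\mathcal{O}(\log m)$.

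The main obstacle is the EREW restriction inside each matrix product. Computing all $m^2$ dot products of $S P_{k+1}$ in parallel requires every entry of $S$ and of $P_{k+1}$ to be read by $m$ processors, but EREW forbids concurrent reads of the same cell. The plan is to first replicate $S$ and $P_{k+1}$ into $m$ copies by a doubling broadcast tree. This costs span $\mathcal{O}(\log m)$ and work $\mathcal{O}(m^3)$, which stays within the per-step budget. Because $S$ does not change across steps, its copies can even be made once up front. With the replication in place, the per-step bounds hold in EREW and the sum above goes through unchanged.
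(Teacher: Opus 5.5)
Your argument is correct and follows the paper's proof: $p$ sequential Horner steps, each dominated by an $m\times m$ matrix product with work $\mathcal{O}(m^3)$ and span $\mathcal{O}(\log m)$, whose spans add along the chain. Your replication step for EREW concurrent reads is a refinement the paper omits (it simply assumes the $\mathcal{O}(\log b)$ matrix-multiply span), but your recurrence is off by one and divides by zero at $k=0$; start from $P_{p+1}=\mathbf{I}$, run $k=p,\dots,1$, and output $P_1$, which still gives $p$ steps and leaves both bounds unchanged.
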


\begin{proof}
Horner evaluation $(((\frac{1}{p!}S + \frac{1}{(p-1)!}I)S + \cdots)S
+ I)$ requires $p$ sequential matrix multiplications, each costing
$W = \mathcal{O}(m^3)$ and $S = \mathcal{O}(\log m)$. Sequentiality of
the $p$ steps gives total span $p \cdot \mathcal{O}(\log m) =
\mathcal{O}(p \log m)$.
\end{proof}

\subsection*{Stage 5: One-Sided Decomposition}

\begin{proposition}[Decomposition work and span]
\label{prop:decomp}
Given $A \in \mathbb{R}^{m \times n}$ and $X \in \mathbb{R}^{m \times
m}$ orthogonal, computing $(X, D, Y)$
satisfies
\[
  W_{\mathrm{dec}} = \mathcal{O}(mnr), \qquad
  S_{\mathrm{dec}} = \mathcal{O}(\log n),
\]
with $r = m$.
\end{proposition}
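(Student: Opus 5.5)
The plan is to decompose the computation of $(X,D,Y)$ into its three constituent operations, namely Eq.~\eqref{eq:diag_closed} for $\mathbf{D}$ and Eq.~\eqref{eq:v_closed} for $\mathbf{Y}$. I will bound each one separately under the stated matrix-multiply cost model (work $\mathcal{O}(abc)$, span $\mathcal{O}(\log b)$). Since the stages run in sequence, spans add and works add.

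\textbf{Step 1 (Gram-type product for $\mathbf{D}$).} I will not form $\mathbf{X}^\top\mathbf{A}\mathbf{A}^\top\mathbf{X}$ explicitly. Instead I compute $\mathbf{Z}=\mathbf{X}^\top\mathbf{A}\in\mathbb{R}^{m\times n}$, which costs work $\mathcal{O}(m\cdot m\cdot n)=\mathcal{O}(mnr)$ and span $\mathcal{O}(\log m)$. Only the diagonal of $\mathbf{Z}\mathbf{Z}^\top$ is needed, because $\mathrm{diag}^{-1}$ discards the rest. Its entries are the squared row norms $\sum_j z_{kj}^2$, which cost work $\mathcal{O}(mn)$ and span $\mathcal{O}(\log n)$ via reduction trees. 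The elementwise absolute value and square root then add $\mathcal{O}(m)$ work and $\mathcal{O}(1)$ span. If the full product $\mathbf{Z}\mathbf{Z}^\top$ were computed instead, its work $\mathcal{O}(m^2 n)$ would still be $\mathcal{O}(mnr)$, so the bound holds either way.

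\textbf{Step 2 (right factor).} The right factor is $\mathbf{Y}=\mathbf{A}^\top\mathbf{X}\mathbf{D}^{-1}=\mathbf{Z}^\top\mathbf{D}^{-1}$, so it reuses $\mathbf{Z}$. Scaling column $k$ by $1/d_k$ is an elementwise operation with work $\mathcal{O}(mn)$ and span $\mathcal{O}(1)$, since every entry is independent. To respect EREW, I will replicate each $d_k$ across its $n$ readers by a broadcast tree. This costs span $\mathcal{O}(\log n)$ and work $\mathcal{O}(mn)$.

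\textbf{Step 3 (summing).} Adding the stages gives work $\mathcal{O}(m^2n)+\mathcal{O}(mn)=\mathcal{O}(mnr)$ with $r=m$. The span is $\mathcal{O}(\log m)+\mathcal{O}(\log n)+\mathcal{O}(1)=\mathcal{O}(\log n)$, using $m\le n$.

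The main obstacle is the EREW restriction. Concurrent reads of shared scalars, such as $d_k$ or an entry of $\mathbf{X}$ used in many dot products, must be replaced by explicit broadcast trees. I will argue that every such broadcast has fan-out at most $mn$ and therefore depth $\mathcal{O}(\log(mn))=\mathcal{O}(\log n)$, so the span claim is preserved. The matrix-multiply span assumption in the Notation already implicitly absorbs this cost.
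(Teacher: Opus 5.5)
Your proof is correct, but it bounds a slightly different (leaner) algorithm than the paper's.

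\textbf{The paper's route.} The paper follows Eq.~\eqref{eq:diag_closed} literally:
\begin{itemize}
\item it forms $\mathbf{A}\mathbf{A}^\top$, with work $\mathcal{O}(m^2n)$ and span $\mathcal{O}(\log n)$; this is where its $\log n$ comes from;
\item it then forms $(\mathbf{A}\mathbf{A}^\top)\mathbf{X}$, with work $\mathcal{O}(m^3)$;
\item it reads off the diagonal of $\mathbf{X}^\top\mathbf{A}\mathbf{A}^\top\mathbf{X}$ via a Hadamard product and column sums;
\item it computes $\mathbf{D}^{-1}\mathbf{X}^\top\mathbf{A}$ afresh for the right factor.
\end{itemize}

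\textbf{Your route.} You compute $\mathbf{Z}=\mathbf{X}^\top\mathbf{A}$ once. You read $d_k^2$ off as squared row norms of $\mathbf{Z}$, and obtain $\mathbf{Y}=\mathbf{Z}^\top\mathbf{D}^{-1}$ by a scaling. So your $\log n$ comes from the row-norm reductions and broadcasts, not from $\mathbf{A}\mathbf{A}^\top$.

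\textbf{Comparison.} The proposition is an upper bound, so exhibiting any algorithm that meets it suffices. Yours uses one $m\times m\times n$ product instead of two such products plus an $m^3$ one, and it never forms $\mathbf{A}\mathbf{A}^\top$. The paper's version has the merit of certifying the specific routine that is then composed in Proposition~\ref{prop:total}.

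\textbf{Where your accounting is tighter.}
\begin{itemize}
\item You charge the elementwise absolute value and square root $\mathcal{O}(1)$ span. The paper writes $W=S=\mathcal{O}(r)$ for the square roots, which taken literally would give span $\mathcal{O}(m)$ and contradict the claimed $\mathcal{O}(\log n)$.
\item You confront the EREW restriction explicitly, which the paper's $\mathcal{O}(\log b)$ matrix-multiply convention silently ignores.
\end{itemize}

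On that last point, the sentence that carries the argument is your explicit bound on broadcast depth, $\mathcal{O}(\log(mn))=\mathcal{O}(\log n)$ for $m\le n$. Your closing claim that the Notation's convention ``already implicitly absorbs'' this cost is not accurate. Broadcast fan-out in $\mathbb{R}^{a\times b}\cdot\mathbb{R}^{b\times c}$ depends on the outer dimensions $a,c$, not on $b$. For example, each entry of $\mathbf{X}$ in $\mathbf{X}^\top\mathbf{A}$ is read by $n$ dot products. Drop that remark and rely on the explicit bound.
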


\begin{proof}
The routine performs the following operations in order.

\textbf{(i) $AA^\top \in \mathbb{R}^{m \times m}$:}
$W = \mathcal{O}(m^2 n)$, $S = \mathcal{O}(\log n)$.

\textbf{(ii) $(AA^\top)X \in \mathbb{R}^{m \times r}$:}
$W = \mathcal{O}(m^2 r)$, $S = \mathcal{O}(\log m) \le \mathcal{O}(\log n)$.

\textbf{(iii) Diagonal extraction:}
Elementwise multiply $X \odot [(AA^\top)X]$ and sum each of $r$ columns
over $m$ entries: $W = \mathcal{O}(mr)$, $S = \mathcal{O}(\log m)$.
Square roots of $r$ scalars: $W = S = \mathcal{O}(r) = \mathcal{O}(m)$.
This yields the diagonal of $\mathbf{X}^\top\mathbf{A}\mathbf{A}^\top\mathbf{X}$,
which equals the squared singular values when $\mathbf{X}$ is the exact
left singular matrix, and approximates them for the Lie-constrained $\mathbf{X}$.

\textbf{(iv) Right factor $Y = D^{-1}X^\top A \in \mathbb{R}^{r \times n}$:}
Row-wise scaling $D^{-1}X^\top$: $\mathcal{O}(m^2)$ work, $\mathcal{O}(1)$ span.
Matrix multiply $\mathbb{R}^{r \times m} \cdot \mathbb{R}^{m \times n}$:
$W = \mathcal{O}(mnr)$, $S = \mathcal{O}(\log m)$.

Summing, the dominant work term is $\mathcal{O}(m^2 n) = \mathcal{O}(mnr)$
(since $r = m$), and the dominant span is $\mathcal{O}(\log n)$.
\end{proof}

\subsection*{End-to-End Theorem}

\begin{proposition}[End-to-end complexity]
\label{prop:total}
Under EREW PRAM with $h = \mathcal{O}(r)$ and $m \le n$, the full
neural Lie-SVD forward pass satisfies:
\begin{align*}
  &\textit{Cayley, default:} \,\,
    W = \mathcal{O}(mnr),\,\, S = \mathcal{O}(m \log m). \\
  &\textit{Rodrigues, } m\!=\!n\!=\!3\textit{:} \,\,
    W = \mathcal{O}(mn),\,\, S = \mathcal{O}(\log m). \\
  &\textit{Taylor, degree } p\!=\!\mathcal{O}(1)\textit{:} \,\,
    W = \mathcal{O}(mnr),\,\, S = \mathcal{O}(\log m).
\end{align*}
\end{proposition}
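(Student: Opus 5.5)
The end-to-end bounds follow by composing the stage-wise results, Propositions~\ref{prop:preproc} through \ref{prop:decomp}. The forward pass is a straight-line pipeline: pre-processing MLP, reduction MLP, skew-symmetric scatter, matrix exponential, and one-sided decomposition. Each stage needs the full output of the previous one, so spans add along the critical path while works add trivially. In the work--span model, sequential composition gives $W=\sum_i W_i$ and $S=\sum_i S_i$. I will write both totals and then simplify them using $h=\mathcal{O}(r)$, $r=m\le n$.

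For the work, the two MLPs each contribute $\mathcal{O}(mnh)=\mathcal{O}(mnr)$. The scatter contributes $\mathcal{O}(m^2)\le\mathcal{O}(mn)$, and the decomposition contributes $\mathcal{O}(mnr)$. The exponential contributes $\mathcal{O}(m^3)$ for both Cayley and constant-degree Taylor. Since $m^3=m\cdot m\cdot r\le mnr$, every general case gives $W=\mathcal{O}(mnr)$. In the Rodrigues case $m=n=3$, all dimensions are constants, so the stated $\mathcal{O}(mn)$ (equivalently $\mathcal{O}(1)$) follows because $r=3$ is absorbed into the constant.

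For the span, the MLPs give $\mathcal{O}(\log(mn))$, the scatter $\mathcal{O}(1)$, and the decomposition $\mathcal{O}(\log n)$. The exponential gives $\mathcal{O}(m\log m)$ for Cayley, $\mathcal{O}(\log m)$ for Taylor with $p=\mathcal{O}(1)$, and $\mathcal{O}(1)$ for Rodrigues. For Cayley the sum is $\mathcal{O}(m\log m+\log n)$. For Rodrigues with $m=n=3$, every term is a constant, so $\mathcal{O}(\log m)$ holds trivially.

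\textbf{Main obstacle.} The difficulty is reconciling the stated $\mathcal{O}(\log m)$ and $\mathcal{O}(m\log m)$ spans with the $\log(mn)$ and $\log n$ terms from the MLPs and the decomposition. These are only dominated when $n$ is polynomially bounded in $m$, i.e.\ $\log n=\mathcal{O}(\log m)$. That holds in the paper's regimes: square inputs, or zero-padding to a square matrix as in the analytic construction, where $n=m$.
\begin{itemize}
\item If $n=\mathcal{O}(m^c)$, then $\log(mn)=\mathcal{O}(\log m)$, so the Taylor span is $\mathcal{O}(\log m)$ and the Cayley span is $\mathcal{O}(m\log m)$.
\item For Cayley there is some extra slack: if $n\le 2^{m}$, then $\log n\le m\le m\log m$.
\end{itemize}
I would state this assumption ($n$ polynomial in $m$, satisfied by padding to square) explicitly. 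Without it the honest bound is $\mathcal{O}(\log n)$ plus the exponential's span, and I would flag that rather than claim the stronger form unconditionally.
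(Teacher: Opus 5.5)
Your proposal is correct and follows the paper's own proof. Both sum the stage-wise works and spans from the preceding stage propositions and simplify using $h=\mathcal{O}(r)$ and $r=m\le n$; your direct bound $mnh=\mathcal{O}(mnr)$ is cleaner than the paper's detour through $\mathcal{O}(mn^2)$. The assumption you make explicit, $\log n=\mathcal{O}(\log m)$ (or only $\log n=\mathcal{O}(m\log m)$ for Cayley), is exactly what the paper uses implicitly: it writes ``with $m\approx n$'' in the Taylor case and simply asserts ``LU dominates'' in the Cayley case, so stating the condition openly is an improvement rather than a deviation.
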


\begin{proof}
Sum Propositions~\ref{prop:preproc}--\ref{prop:decomp} for each variant,
noting that stages execute sequentially so spans add while works add.

\textbf{Cayley.}
$W = \mathcal{O}(mnh + mnh + m^2 + m^3 + mnr)$.
Since $h = \mathcal{O}(r) = \mathcal{O}(m)$, $mnh = \mathcal{O}(mn^2)$;
however, $mnr = \mathcal{O}(m^2 n) \ge m^3$ (as $n \ge m$), so
$W = \mathcal{O}(mnr)$.
Span: $\mathcal{O}(\log(mn)) + \mathcal{O}(\log(mn)) + \mathcal{O}(1) +
\mathcal{O}(m\log m) + \mathcal{O}(\log n) = \mathcal{O}(m \log m)$
(LU dominates).

\textbf{Rodrigues.}
Stage 4 costs $\mathcal{O}(1)$ work and span; Stage 5 dominates:
$W = \mathcal{O}(mnr) = \mathcal{O}(mn)$ (with $r = m = n = 3$
constant), $S = \mathcal{O}(\log n) = \mathcal{O}(1)$.
Since all stages are $\mathcal{O}(1)$ for fixed $m = n = 3$, we report
$S = \mathcal{O}(\log m)$ to convey scalability intent.

\textbf{Taylor ($p = \mathcal{O}(1)$).}
Stage 4 span becomes $\mathcal{O}(\log m)$; Stage 5 span $\mathcal{O}(\log n)$
dominates. With $m \approx n$, $S = \mathcal{O}(\log m)$.
Work as in Cayley case: $\mathcal{O}(mnr)$.
\end{proof}

\begin{corollary}[GPU speedup via Brent's theorem]
\label{cor:brent}
On a GPU with $P$ SIMD lanes, the wall-clock time satisfies
\[
  T_P = \mathcal{O}\!\left(\frac{W}{P} + S_{\mathrm{total}}\right).
\]
The speedup relative to single-threaded Golub--Reinsch
($T_1^{\mathrm{GR}} = \mathcal{O}(mn^2)$, $S^{\mathrm{GR}} =
\mathcal{O}(n \log n)$) is
\[
  \frac{T_1^{\mathrm{GR}}}{T_P} = \mathcal{O}\!\left(\frac{mn^2}{W/P + S_{\mathrm{total}}}\right),
\]
which is meaningful when $P \gg r$, i.e.\ the GPU is sufficiently
wide relative to the matrix rank.
\end{corollary}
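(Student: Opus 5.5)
The corollary is a direct bookkeeping consequence of Brent's theorem applied to the totals of Proposition~\ref{prop:total}, so the plan is to substitute and then interpret the resulting ratio. First I would invoke Brent's theorem~\cite{brent1974parallel} as stated in the model assumptions: any computation DAG with work $W$ and span $S$ can be scheduled on $P$ processors in time $T_P=\mathcal{O}(W/P+S)$. I would apply this to the Quasi-SVD forward pass. Here $W$ and $S_{\mathrm{total}}$ are taken from Proposition~\ref{prop:total}: $W=\mathcal{O}(mnr)$ for the Cayley and Taylor variants and $\mathcal{O}(mn)$ for Rodrigues, with $S_{\mathrm{total}}=\mathcal{O}(m\log m)$ for Cayley and $\mathcal{O}(\log m)$ otherwise. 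Because the stage-wise spans were already summed sequentially in that proposition, no further DAG analysis is needed. This yields the first display of the corollary.

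Second, for the baseline I would note that a single-threaded execution has $P=1$. Brent's bound then degenerates to $T_1=\mathcal{O}(W+S)=\mathcal{O}(W)$, since $S\le W$ for any DAG. With the Golub--Reinsch work $\mathcal{O}(mn^2)$ from Table~\ref{tab:complexity}, this gives $T_1^{\mathrm{GR}}=\mathcal{O}(mn^2)$; the stated span $\mathcal{O}(n\log n)$ plays no role for $P=1$. Dividing $T_1^{\mathrm{GR}}$ by the bound on $T_P$ gives the displayed ratio.

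Third, I would justify the remark that the bound is meaningful when $P\gg r$. Using $r=m\le n$, the work term is
\[
W/P=\mathcal{O}(m^2n/P)=\mathcal{O}\bigl(mn\cdot r/P\bigr),
\]
which is $o(mn)$ once $P\gg r$. In the Taylor and Rodrigues variants the span is only logarithmic, so the denominator becomes $o(mn)+\mathcal{O}(\log m)$ and the ratio is of order at least $\min\bigl(Pn/m,\;mn^2/\log m\bigr)$, which grows with $P$. For Cayley the same computation applies with $\mathcal{O}(m\log m)$ in place of $\mathcal{O}(\log m)$, giving a ceiling of order $n^2/\log m$. Conversely, when $P\lesssim r$ the ratio is $\mathcal{O}(Pn/m)$, which is at best a constant factor when $m\approx n$ and $P$ is small.

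The main obstacle is the direction of the $\mathcal{O}$-inequalities rather than any computation. Brent's theorem upper-bounds $T_P$, which lower-bounds the speedup, whereas $T_1^{\mathrm{GR}}=\mathcal{O}(mn^2)$ is itself an upper bound, so the quotient of two $\mathcal{O}$-expressions is not literally valid as written. I would address this by treating $mn^2$ as the tight $\Theta$ cost of Golub--Reinsch (bidiagonalisation genuinely performs $\Theta(mn^2)$ flops). I would then state the speedup as $\Omega\bigl(mn^2/(W/P+S_{\mathrm{total}})\bigr)$, noting that this is the intended reading of the displayed $\mathcal{O}$.
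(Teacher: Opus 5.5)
Your first two steps match what the paper intends. The paper gives no separate proof: the corollary is Brent's bound from the model assumptions, evaluated with $W$ and $S_{\mathrm{total}}$ from Proposition~\ref{prop:total} and the Golub--Reinsch work from Table~\ref{tab:complexity}.

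The problem is your final paragraph. You are right that dividing one $\mathcal{O}$-bound by another proves nothing. You are wrong, though, to conclude that the display is not literally valid and must be reread as an $\Omega$-statement. As written, your proposal establishes a different bound and leaves the stated one unproved. What is missing are the two elementary lower bounds that accompany Brent's theorem. In any $P$-processor execution of the DAG, at most $P$ operations complete per step, and the critical path must be traversed sequentially. Hence $T_P \ge \max(W/P,\,S_{\mathrm{total}}) \ge \tfrac12(W/P+S_{\mathrm{total}})$. Combined with $T_1^{\mathrm{GR}} = \mathcal{O}(mn^2)$, this gives $T_1^{\mathrm{GR}}/T_P \le 2\,T_1^{\mathrm{GR}}/(W/P+S_{\mathrm{total}}) = \mathcal{O}\bigl(mn^2/(W/P+S_{\mathrm{total}})\bigr)$, which is the corollary verbatim. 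You already use the work law $T_P \ge W/P$ in the sentence beginning ``Conversely, when $P\lesssim r$''; it is exactly what settles the main display.

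Together with Brent's upper bound, this gives $T_P = \Theta(W/P+S_{\mathrm{total}})$. Your $\Omega$ statement is therefore the complementary half, not a replacement. With a $\Theta(mn^2)$ baseline the speedup is $\Theta\bigl(mn^2/(W/P+S_{\mathrm{total}})\bigr)$. The $\Omega$ half is what supports the ``meaningful when $P\gg r$'' remark, so keep it alongside the stated bound.

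Keep track of which quantities each direction tolerates:
\begin{itemize}
\item The $\mathcal{O}$ direction needs $W$ and $S_{\mathrm{total}}$ to be the true work and span of the DAG, or $\Theta$-tight estimates of them. Inserting the upper bounds of Proposition~\ref{prop:total} into the denominator pushes the inequality the wrong way.
\item Your $\Omega$ direction accepts those upper bounds but needs a lower bound on the baseline. Your justification of that lower bound is imprecise when $m<n$. Householder bidiagonalisation of an $m\times n$ matrix costs only $\Theta(m^2 n)$. The $\Theta(mn^2)$ cost arises only once the full $n\times n$ right factor is accumulated, or when $m\approx n$, as in the experiments.
\end{itemize}
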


\begin{remark}[Memory-access patterns]
The span bounds above assume unit-cost memory access.  In practice,
GPU coalesced access requires column-major layout for the matrix multiply
in Stage 5 and row-major for the MLP weight matrices.  Cache-miss
penalties can inflate effective span by a constant factor that depends on
matrix tile size relative to shared-memory capacity; this does not affect
asymptotic span but is important for constant-factor comparisons.
\end{remark}

\section{Attempts that did not work}
\label{supp:sec:attempts}

\subsection{Two-sided orthogonality constraint}

The proposed orthogonality constraint can be simultaneously applied to $\mathbf{X}$ and $\mathbf{Y}$. This \textit{two-sided} constraint requires matching orthogonal singular vectors in $\mathbf{X}$ and $\mathbf{Y}$ to satisfy
$\mathbf{A}=\mathbf{X}\mathbf{D}\mathbf{Y}^\top$. In practice, the neural architectures used in this study fail to reconstruct valid combinations when enforcing both $\mathbf{X}\in\textrm{SO}(m)$ and $\mathbf{Y}\in\textrm{SO}(n)$ such that predicted components result in $\mathbf{A}\neq\mathbf{X}\mathbf{D}\mathbf{Y}^\top$. Consequently, the inability to reconstruct $\mathbf{A}$ from $\mathbf{X}\mathbf{D}\mathbf{Y}^\top$ does not qualify to be a valid decomposition technique. However, it is worth mentioning that this \textit{two-sided} constraint achieved SSIM scores similar to the \textit{one-sided} equivalent when trained using the $\mathcal{L}_{\mathrm{end}}$ loss from Eq.~\eqref{eq:loss:end2end}.

\subsection{Energy-based optimization}
\label{supp:subsec:optimization}

Building on the notation of Eqs.~\eqref{eq:loss:sigma}, \eqref{eq:loss:off}, and \eqref{eq:loss:ort} of the main paper, one can construct the following overall objective:
\begin{equation}
	\mathbf{\Sigma}^{\star}=\underset{\mathbf{\Sigma} \in \mathbb{R}^{L}}{\arg\min} \; \left\{ 
	\lambda_1\mathcal{L}_{\sigma} +
	\lambda_2\mathcal{L}_{\text{off}} + 
	\lambda_3\mathcal{L}_{\text{ort}}
	\right\},
	\label{eq:optimization}
\end{equation}
for energy-based optimization. In this work, the optimization objective is solved using the limited-memory Broyden--Fletcher--Goldfarb--Shanno algorithm (LBFGS) as an efficient Hessian-based optimizer.
From the optimal estimate $\mathbf{\Sigma}^{\star}$, the components $\mathbf{X}^\star$, $\mathbf{D}^\star$, and $\mathbf{Y}^\star$ are obtained analogous to Section~\ref{sec:method} of the main paper. \par
Table~\ref{tab:energy} supplements the MMP benchmark results in Table~\ref{tab:metrics:npp} of the main paper. The energy-based solutions are close to the corresponding initialization while incurring additional computational cost. 

\begin{table*}[!ht]
\caption{Energy-based optimization results complementing the NPP test set results from Table~\ref{tab:metrics:npp} in the main paper.}\label{tab:energy}
    \resizebox{\textwidth}{!}{%
        \begin{tabular}{l c 
        S[table-format=2.2(2),detect-all] 
        S[table-format=2.2(2),detect-all] 
        S[table-format=2.2(2),detect-all] 
        S[table-format=2.2(2),detect-all] 
        S[table-format=2.2(2),detect-all] 
        }
        \toprule
        Method & RNN & $\overline{\mathrm{SSIM}(\mathbf{M})}\uparrow$ & $\overline{\|\mathbf{X}\|_F^{\text{ort}}}\downarrow$ & $\overline{\|\mathbf{D}\|_F^{\text{rel}}}\downarrow$ & $\overline{\|\mathbf{Y}\|_F^{\text{ort}}}\downarrow$ & $\text{Time}~\text{[ms]}\downarrow$ \\
        \midrule
        Energy (Analytic init) & \ding{55} & 0.69(0.08) & 0.00(0.00) & 0.08(0.03) & 0.44(0.08) & 28.13(0.04) \\
        Hybrid (LieNN init) & \ding{51} & 0.94(0.03) & 0.00(0.00) & 0.01(0.00) & 0.10(0.03) & 33.69(0.08) \\
        \bottomrule
        \end{tabular}
    }
\end{table*}

\subsection{Hybrid learning and energy optimization}

Using the energy-based optimization as an extension for the neural decompositions in Section~\ref{sec:method} of the main paper did not improve on the results but added a large computational overhead.

\subsection{Learned recurrent optimizer}

While the RNN is used as $\mathbf{A}^{(t+1)}=\textrm{RNN}\left(\mathbf{A}^{(t)}\right)$  with step $t$ in the presented results, an alternative approach implies computing residuals $\mathbf{r}$ from:
\begin{align*}
	\mathbf{r} = \mathbf{d}_{\mathbf{X}} - \mathbf{d}_{\mathbf{Y}}
\end{align*}
with diagonal vectors $\mathbf{d}_{\mathbf{X}}$ and $\mathbf{d}_{\mathbf{Y}}$ from Eq.~\eqref{eq:diag_closed} in the main paper for $\mathbf{X}$ and $\mathbf{Y}$, respectively. The intuition is to then let networks be aware of singular value distances by:
\begin{align*}
	\mathbf{D}^{(t+1)}_{\mathbf{X}}=\mathbf{D}^{(t)}_{\mathbf{X}} + \textrm{RNN}_{\mathbf{X}}(\mathbf{r}^{(t)}) \, , \\
	\mathbf{D}^{(t+1)}_{\mathbf{Y}}=\mathbf{D}^{(t)}_{\mathbf{Y}} + \textrm{RNN}_{\mathbf{Y}}(\mathbf{r}^{(t)}) \, ,
\end{align*}
to achieve a matching set of left and right singular vectors. \par
The results of such learned optimizers remain below 0.5 SSIM indicating that the network architectures struggle to find underlying data patterns using this scheme.

\section{Additional Experimental Results}

This appendix provides extended results that exceed the space constraints of the main paper.

\subsection{Numerical error growth}

We analyze how the metrics in Section~\ref{ssec:metrics} of the main paper evolve with the matrix dimension $m$.
Since publicly available datasets rarely contain matrices sampled at uniform size intervals, we generate synthetic data with known ground-truth factors, as detailed below. Figure~\ref{fig:error} shows how error metrics evolve with the square matrix dimension $m$.

\begin{figure*}[!t]
    \begin{minipage}[t]{0.05\linewidth}
        \raisebox{1.0\height}{\rotatebox{90}{Error~[a.u.]}}
    \end{minipage}%
    \begin{minipage}[t]{0.94\linewidth}
        \centering 
        \includegraphics[width=\linewidth]{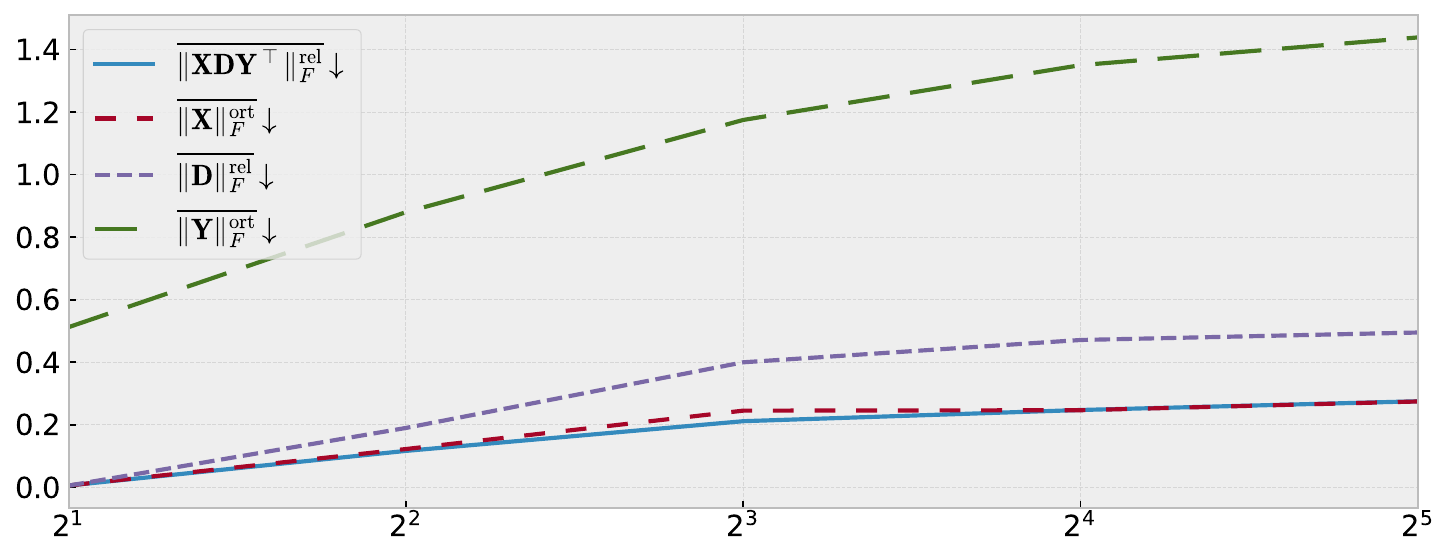}
        Matrix dimension $m$ [\#]
    \end{minipage}
    \hspace{-6cm}
    \caption{\textbf{Error growth for varying dimension $m$ in square matrices.} The curves depict the metrics from Section~\ref{ssec:metrics} of the main paper, representing the orthogonality errors $\overline{\|\mathbf{X}\|_F^{\text{ort}}}$ and $\overline{\|\mathbf{Y}\|_F^{\text{ort}}}$ as well as deviations for singular values in $\overline{\|\mathbf{D}\|_F^{\text{rel}}}$ and a matrix reconstruction error $\overline{\|\mathbf{X}\mathbf{D}\mathbf{Y}^{\top}\|_F^{\text{rel}}}$ after decomposition. \label{fig:error}}
\end{figure*}

\paragraph{Synthetic matrix generation.}
We generate tuples $(\mathbf{A},\,\mathbf{U},\,\mathbf{\Sigma},\,\mathbf{V})$ with known singular value decomposition,
where $\mathbf{A} = \mathbf{U}\,\mathbf{\Sigma}\,\mathbf{V}^{\!*}$,
$\mathbf{U} \in \textrm{O}(m)$,
$\mathbf{V} \in \textrm{O}(n)$,
and $\mathbf{\Sigma} \in \mathbb{R}^{m \times n}$ is diagonal.  
For each sample index $i$, two independent orthogonal matrices are created by drawing Gaussian random matrices and retaining only the orthogonal factor from QR factorisation:
\begin{align*}
	\mathbf{U} &= \mathrm{QR}\!\big(\mathbf{G}_U\big), \quad \text{where} \quad  \mathbf{G}_U \sim \mathcal{N}(0,1)^{m\times m}\\[-1pt]
	\mathbf{V} &= \mathrm{QR}\!\big(\mathbf{G}_V\big), \quad \text{where} \quad  \mathbf{G}_V \sim \mathcal{N}(0,1)^{n\times n}.
\end{align*}

The singular values follow one of three patterns: linear decay, exponential decay, or a randomized monotone decay. %
Let $k = \min(n,m)$ and define a baseline grid
\[
\tilde{\sigma}_j = 1 - \frac{j-1}{k-1}, \qquad j=1,\dots,k.
\]
Each profile is obtained by an exponent \mbox{$\alpha \sim \mathcal{U}(0,10)$}, a scale factor
$c \in [1, 10^{r}]$ with $r \in \{0,1,2\}$, and an offset
$o \in [10^{-2}, 10^{0}]$.  The offset $o > 0$ ensures that all singular values are strictly positive; the method is therefore evaluated on full-rank matrices and results may not generalise to rank-deficient inputs. The singular values are then
\begin{align*}
	\sigma_j &=
	\begin{cases}
		(\tilde{\sigma}_j)^{\alpha}, & \text{linear or exponential},\\[3pt]
		(\mathrm{sort}(\mathrm{rand}(k)))^{\alpha}, & \text{random},
	\end{cases}\\[-2pt]
	\sigma_{jj} &= c\,\sigma_j + o.
\end{align*}
The resulting sequence is strictly non-increasing by design. Based on these components, the synthetic matrix is assembled as $\mathbf{A}=\mathbf{U}\,\mathbf{\Sigma}\,\mathbf{V}^{\!*}$.

\paragraph{Synthetic data training} Assessment of varyingly sized matrices requires models to be trained for which an AdamW optimizer is used with gradient clipping and cosine-annealed learning rates converging to zero. Weights are initialized with Xavier (Glorot) uniform for linear layers, biases drawn from \(\mathcal{N}(0,10^{-6})\), and batch-norm weights/biases set to \(1\) and \(0\), respectively. For the GPU, we employ an Nvidia RTX 4090 with batch size 4, 10 epochs, initial learning rate $1\mathrm{e}{-3}$ and a best checkpoint selected by lowest \(\mathcal{L}_{\textrm{ort}}\) (Eq.~\eqref{eq:loss:ort} in main paper).

\paragraph{Synthetic data results}

The dominant source of error arises in the right-factor orthogonality $\mathbf{Y}$, followed by deviations in the recovered singular values. In contrast, the orthogonality of $\mathbf{X}$ and the final reconstruction error remain comparatively low. These trends likely stem from the approximation of the matrix exponential and the challenge of reliably recovering spectra with widely varying decay patterns. Relaxing strict orthogonality for $\mathbf{Y}$ enables low reconstruction errors.

\subsection{Additional Polarimetric Test Images}

\begin{figure*}[!ht]
	\centering
	\begin{minipage}[c]{.95\linewidth}
		\includegraphics[width=\linewidth]{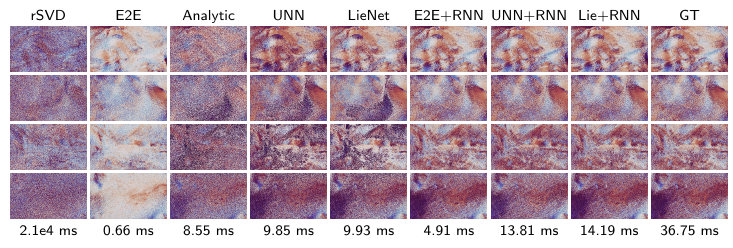}
	\end{minipage}
	\hfill
	\begin{minipage}[c]{.04\linewidth}
		\includegraphics[width=\linewidth]{figs/azimuth_colorbar_shifted_vertical_tight.pdf}
	\end{minipage}
	\caption{\textbf{MMP benchmark comparison (pt. 2).} The images show the remaining NPP test samples of the per-pixel azimuth $\varphi_i$ decomposed by the various models. \label{fig:azimuth:remaining}}
\end{figure*}

Figure~\ref{fig:azimuth:remaining} provides the remaining azimuth images from the NPP test set, complementing those shown in Fig.~\ref{fig:azimuth} of the main paper. The visual trends mirror the quantitative findings reported earlier.

\vfill

\end{document}